\newcommand{\longversion}[1]{#1}
\newcommand{\shortversion}[1]{}
\documentclass[10pt,a4paper]{article}
\usepackage[hscale=0.7,scale=0.75]{geometry}

\usepackage{graphicx} %
\usepackage{caption} %
\frenchspacing  %
\usepackage{algorithm}
\usepackage{algorithmic}
\usepackage{tikz}
\usepackage{newfloat}
\usepackage{listings}
\DeclareCaptionStyle{ruled}{labelfont=normalfont,labelsep=colon,strut=off} %
\lstset{%
	basicstyle={\footnotesize\ttfamily},%
	numbers=left,numberstyle=\footnotesize,xleftmargin=2em,%
	aboveskip=0pt,belowskip=0pt,%
	showstringspaces=false,tabsize=2,breaklines=true}
\floatstyle{ruled}
\newfloat{listing}{tb}{lst}{}
\floatname{listing}{Listing}

\usepackage{tabularx}
\usepackage{amsthm}
\newtheorem{definition}{Definition}
\newtheorem{example}[definition]{Example}
\newtheorem{remark}[definition]{Remark}

\newcommand{\toolname}[1]{\ensuremath{\mathtt{#1}}}

\newcommand{\sharpP}{{\sf \#P}}

\newcommand{\PSPACE}{{\sf PSPACE}}

\newcommand{\PP} {{\sf PP}}

\newcommand{\sharpSAT}{{\#SAT}}

\usepackage{amsmath}
\usepackage{amsthm}
\usepackage{amssymb}
\usepackage{enumerate}
\usepackage{enumitem}
\usepackage{nicefrac}
\usepackage{booktabs}

\newcommand{\lp}{\Pi}                                         %

\usepackage{mathtools}
\usepackage{xspace}

\newcommand{\complexityClassFont}[1]{\ensuremath{\mathsf{#1}}}
\newcommand{\NP}{\text{\complexityClassFont{NP}}\xspace}
\newcommand{\co}{\complexityClassFont{co}}
\newcommand{\Ptime}{\complexityClassFont{P}\xspace}

\usepackage{stmaryrd}
\usepackage[mathscr]{eucal}
\newcommand{\res}[2]{\ensuremath{#1\llbracket#2\rrbracket}}
\newcommand{\task}{\ensuremath{\Pi}}
\newcommand{\vars}{\ensuremath{\mathcal A}}

\newcommand{\operators}{\ensuremath{\mathcal O}}
\newcommand{\init}{\ensuremath{\mathcal I}}
\newcommand{\goal}{\ensuremath{\mathcal G}}
\DeclareMathOperator{\pre}{pre}
\DeclareMathOperator{\eff}{eff}

\newcommand{\plan}{\ensuremath{\pi}}
\newcommand{\len}[1]{\ensuremath{|#1|}}

 \newcommand{\citex}[1]{\citeauthor{#1}~\shortcite{#1}}

\pagenumbering{arabic}
\pagestyle{plain}
\nonfrenchspacing

\usepackage{rotating}
\usepackage{multirow}
\usepackage{array}
\newcolumntype{H}{>{\setbox0=\hbox\bgroup}c<{\egroup}@{}}
\usepackage{mathtools}
\newcommand{\eqdef}{\coloneqq}

\DeclareMathOperator{\dom}{dom}
\newcommand{\pname}[1]{\textsc{#1}\xspace}
\DeclareMathOperator{\poly}{\textit{poly}}
\newcommand{\Card}[1]{\ensuremath{|#1|}}

\let\phi=\varphi
\let\epsilon=\varepsilon

\newcommand{\prob}{\mathbb{P}}
\DeclareMathOperator{\AP}{Plans}
\newcommand{\PlanCons}{\pname{Bounded-Plan-Exist}}
\newcommand{\PlanCount}{\pname{\#Bounded-Plan}}

\newcommand{\PolyPlanCautious}{\pname{Poly-Cautious-Plan-Exist}}
\newcommand{\PolyPlanBrave}{\pname{Poly-Brave-Plan-Exist}}
\newcommand{\PolyPlanCons}{\pname{Poly-Bounded-Plan-Exist}}
\newcommand{\PolyPlanCount}{\pname{\#Poly-Bounded-Plan}}
\newcommand{\PolyTopK}{\pname{Poly-Bounded-Top-k-Exist}}

\newcommand{\FacetReason}{\protect\pname{FacetReason}}
\newcommand{\ProbReasoning}{\protect\pname{Poly-Probabilistic-Reason}}

\newcommand{\kFacetWeight}{\protect{\pname{Exact-k-Facets}}\xspace}
\newcommand{\kFacetAtLeast}{\protect{\pname{AtLeast-k-Facets}}\xspace}
\newcommand{\kFacetAtMost}{\protect{\pname{AtMost-k-Facets}}\xspace}

\newcommand{\DP}{\ensuremath{\textsc{D}^\textsc{\Ptime}}\xspace}

\newcommand{\CE}{\ensuremath{\textsc{C}^\textsc{P}_{=}}\xspace}

\newcommand{\bcite}[1]{\ensuremath{[}#1\ensuremath{]}}
\newcommand{\set}[0]{\ensuremath{\bigtriangledown}}

\DeclareMathOperator{\BC}{\mathcal{BO}}                    %
\DeclareMathOperator{\CC}{\mathcal{CO}}                    %
\newcommand{\FA}{\mathcal{F}}                 %
\DeclareMathOperator{\Mod}{Mod}
\DeclareMathOperator{\at}{vars}

\newcommand{\fplan}{F^{\text{plan}}_{\leq\ell}}

\newcommand{\Nat}{\ensuremath{\mathbb{N}}}
\newcommand{\SB}{\{}%
\newcommand{\SM}{\mid}%
\newcommand{\SE}{\}}%

\usepackage[affil-it]{authblk}
\usepackage[english]{babel}
\usepackage{microtype}
\usepackage[misc]{ifsym}%
\newcommand{\email}[1]{\texttt{#1}}

\title{Counting and Reasoning with Plans%
  \thanks{%
    This is an author self-archived and extended version of a paper
    that has been accepted for publication at AAAI’25.\\
    \Letter: \email{davidjakob.speck@unibas.ch}, \email{hecher@cril.fr},
    \email{daniel.gnad@liu.se}, \email{johannes.fichte@liu.se},\newline \mbox{~~~~~}\email{augusto.blaascorrea@chch.ox.ac.uk} }
}
\author{
  David Speck%
}
\affil{University of Basel, Switzerland}

\author{
  Markus Hecher%
}
\affil[2]{
  Univ. Artois, CNRS, UMR 8188, CRIL, F-62300 Lens, France
}
\author{
  Daniel Gnad%
}

\author{
  Johannes K. Fichte%
}
\affil{Link{\"o}ping University, Sweden}

\author{
    Augusto B. Corrêa%
  }
\affil{University of Oxford, United Kingdom}

\usepackage{thmtools}
\usepackage{thm-restate}
\usepackage{apptools}
\usepackage{xpatch}
\makeatletter
\xpatchcmd{\thmt@restatable}%
{\csname #2\@xa\endcsname\ifx\@nx#1\@nx\else[{#1}]\fi}%
{\IfAppendix{\csname #2\@xa\endcsname}{\csname #2\@xa\endcsname\ifx\@nx#1\@nx\else[{#1}]\fi}}%
{}{} %
\makeatother

\makeatletter
\longversion{
  \def\leftcite{\@up[}\def\rightcite{\@up]}
  
  \def\cite{\def\citeauthoryear##1##2{\def\@thisauthor{##1}%
               \ifx \@lastauthor \@thisauthor \relax \else##1, \fi ##2}\@icite}
  \def\shortcite{\def\citeauthoryear##1##2{##2}\@icite}
  
  \def\citeauthor{\def\citeauthoryear##1##2{##1}\@nbcite}
  \def\citeyear{\def\citeauthoryear##1##2{##2}\@nbcite}

  \def\@icite{\leavevmode\def\@citeseppen{-1000}%
   \def\@cite##1##2{\leftcite\nobreak\hskip 0in{##1\if@tempswa , ##2\fi}\rightcite}%
   \@ifnextchar [{\@tempswatrue\@citex}{\@tempswafalse\@citex[]}}
  \def\@nbcite{\leavevmode\def\@citeseppen{1000}%
   \def\@cite##1##2{{##1\if@tempswa , ##2\fi}}%
   \@ifnextchar [{\@tempswatrue\@citex}{\@tempswafalse\@citex[]}}

  \def\@citex[#1]#2{%
    \def\@lastauthor{}\def\@citea{}%
    \@cite{\@for\@citeb:=#2\do
      {\@citea\def\@citea{;\penalty\@citeseppen\ }%
       \if@filesw\immediate\write\@auxout{\string\citation{\@citeb}}\fi
       \@ifundefined{b@\@citeb}{\def\@thisauthor{}{\bf ?}\@warning
         {Citation `\@citeb' on page \thepage \space undefined}}%
       {\csname b@\@citeb\endcsname}\let\@lastauthor\@thisauthor}}{#1}}

  \def\@biblabel#1{\def\citeauthoryear##1##2{##1, ##2}\@up{[}#1\@up{]}\hfill}
  
  \def\@up#1{\leavevmode\raise.2ex\hbox{#1}}
}
\makeatother

\usepackage{relsize}
\usepackage{nicefrac}

\bibliographystyle{named}
\usepackage{hyperref}
\usepackage[hyphenbreaks]{breakurl}
\hypersetup{colorlinks,allcolors=black}
\usepackage{xurl}
\hypersetup{breaklinks=true}

\begin{document}

\maketitle

\begin{abstract}
  Classical planning asks for a sequence of operators reaching a given goal.
  While the most common case is to compute a plan, many %
  scenarios require more than that.
  However, quantitative reasoning on the plan space remains mostly unexplored.
  A fundamental problem is to \emph{count plans}, which relates
  to the conditional probability on the plan space. Indeed, qualitative and
  quantitative approaches are well-established in various other areas of
  automated reasoning.

  We present the first study to quantitative and qualitative reasoning on the
  plan space. In particular, we focus on polynomially bounded plans. On the
  theoretical side, we study its complexity, which gives rise
  to rich reasoning modes.
  Since counting is hard in general, we introduce the easier notion of facets,
  which enables understanding the significance of operators.
  On the practical side, we implement quantitative reasoning for
  planning. Thereby, we transform a planning task into a propositional
  formula and use knowledge compilation %
  to count different plans.
  This framework scales well to large plan spaces, while enabling rich
  reasoning capabilities such as learning pruning functions and
  explainable planning.
\end{abstract}

\section{Introduction}

The overarching objective of classical planning is to find a plan, i.e., a sequence of operators, that transforms the current state into a goal state. 
While in some scenarios a single plan is sufficient, in others, it may not be clear which plan is preferable based on the description of the planning task. 
To address this, solvers like top-k or top-quality planners have been developed to enumerate the $k$ shortest plans or all plans up to a certain length bound %
allowing for post hoc consideration of the plan space and selection %
\cite{katz-et-al-icaps2018,katz-sohrabi-aaai2020,speck-et-al-aaai2020,vontschammer-et-al-icaps2022,chakraborti-et-al-aaai2024}. 
Although this paradigm has been successfully applied in practical areas such as malware detection \cite{boddy-et-al-icaps2005} and scenario planning for risk management \cite{sohrabi-et-al-aaai2018}, it remains an indirect method for reasoning about the plan space of a planning task.

Considering fundamental problems in computer science, such as the propositional satisfiability problem (SAT), answer set programming (ASP), and constraint satisfaction problems (CSP), more directed reasoning schemes exist that are anchored around counting. %
The most prominent and canonical counting problem is \sharpSAT{}, also called \emph{model counting}, which asks to compute the number of models of %
a %
formula.
While \sharpSAT{} is considered computationally harder than asking whether a single model exists (SAT), it also allows for automated reasoning about the solution space~\cite{darwiche-jacm2001,darwiche-marquis-jair2002}.
Recent competitions illustrate that, despite high computational complexity, state-of-the-art solvers are effective in practice~\cite{fichte-et-al-jea2021}.
Due %
favorable reasoning power and vast applications, %
counting techniques have been extended to
other fields~\cite{aziz-et-al-aaai2015,fichte-et-al-lpnmr2017,hahn-et-al-rulemlpr2022,eiter-et-al-aij2024}.

In this paper, we bridge the gap between model counting and %
classical planning by introducing a new framework for reasoning and analyzing plan space.
To do so, we consider all plans for a given planning task with polynomially bounded length, consistent with the approach used in top-quality planning \cite{katz-sohrabi-aaai2020}.
\paragraph{Contributions}
Our main contributions are as follows:
\begin{enumerate}
    \item We introduce a \emph{taxonomy of counting and reasoning problems  for classical planning} with polynomially bounded plan lengths and establish the computational complexity of these problems.
    \item We identify a class of reasoning problems on the plan space, called \emph{facet reasoning}, that are as hard as polynomially bounded planning and thus can be solved more efficiently than counting problems.
    \item We present a practical tool, \toolname{Planalyst}, that builds on existing planning and knowledge compilation techniques to answer plan-space reasoning queries and demonstrate its practical feasibility.
\end{enumerate}

\paragraph{In more detail,}on the theoretical side, we formally define a taxonomy of counting and reasoning problems for planning and analyze the computational complexity of these problems.
Among other results, we show that the problem of probabilistic reasoning about the plan space such as determining how many plans contain a given operator is \CE{}-complete, which is considered computationally harder than counting the number of plans, known to be \sharpP-complete \cite{speck-et-al-aaai2020}.
We also introduce the notion of \emph{facet reasoning} in the context of planning, which has %
origins in computational complexity \cite{papadimitriou-yannakakis-stoc1982} and is well studied in %
ASP~\cite{alrabbaa-et-al-rulemlpr2018,fichte-et-al-aaai2022}. %
We show that facet reasoning in planning is \NP-complete, and thus probably much simpler than counting the number of plans. 
This theoretical result is significant because it allows more efficient answers to complex reasoning queries about the plan space, such as identifying which operators can complement a given partial plan and which provide more flexibility for further complementation.

On the practical side, we present a solution to the studied counting and reasoning problems by transforming a planning task into a propositional formula, where satisfying assignments correspond one-to-one to plans, followed by subsequent knowledge compilation into a d-DNNF \cite{darwiche-marquis-jair2002}.
We implement this as a tool called \toolname{Planalyst}, which builds on existing tools from planning \cite{rintanen-ipc2014} and knowledge compilation \cite{lagniez-marquis-ijcai2017,sundermann-et-al-tsem2024} and thus readily allows plan counting and automated reasoning in plan space.
Empirically, we compare \toolname{Planalyst} to state-of-the-art top-quality planners on the computationally challenging problem of counting plans, and show that our tool performs favorably, especially when the plan space is large and reasoning over trillions of plans is critical. 
Finally, by constructing a d-DNNF, our approach not only supports plan counting, but can also answer reasoning questions such as conditional probability, faceted reasoning, and unbiased uniform plan sampling, all through efficient d-DNNF queries.

\subsection*{Related Work}
\citex{darwiche-marquis-jair2002} detailed the theoretical capabilities and
limitations of normal forms in knowledge compilation.
Established propositional knowledge compilers are
\toolname{c2d}\shortversion{~\cite{darwiche-ijcai1999}}\longversion{~\cite{Darwiche04a}}
and \toolname{d4}\shortversion{~\cite{lagniez-marquis-ijcai2017}},
 new developments are
 extensions of
\toolname{SharpSAT\textnormal{-}TD}~\cite{kiesel-eiter-kr2023}.
Incremental and approximate counting has been considered for
ASP~\cite{KabirEverardoShukla22,FichteGagglHecher24}.
In SAT and ASP, advanced enumeration techniques have also been
studied~\cite{masina-et-al-sat2023,spallitta-et-al-arxiv2023,gebser-et-al-cpaior2009,alviano-et-al-aij2023},
which can be beneficial for counting if
the number of solutions is sufficiently low or when (partial) solutions need to be
materialized.
Exact uniform sampling using knowledge compilation has
also been implemented~\cite{lai-et-al-aaai2021}.
Model counting has been applied to probabilistic planning in the
past~\cite{domshlak-hoffmann-jair2007}.
In classical planning and grounding, \citex{correa-et-al-icaps2023}
argued that grounding is infeasible for some domains if the number of
operators in a planning task is too high. Therefore, they manually
employed model counting, but did not develop extended reasoning
techniques or counting tools for planning.
Fine-grained reasoning modes and facets have been studied for
ASP~\cite{alrabbaa-et-al-rulemlpr2018,fichte-et-al-aaai2022,fichte-et-al-ijcai2022,RusovacHecherGebser24,EiterFichteHecher24}
and significance notions based on facets~\cite{boehl-et-al-ecai2023}.

\newcommand{\polyrow}{\multirow{5}{*}{\rotatebox[origin=c]{90}{$\ell \leq \poly(\Pi)$}}}
\begin{table*}[t]
  \centering
  \begin{tabular}{lllll}
    \toprule
    Name              & Given                   & Task                                       & Compl.    & Ref.                                 \\
    \midrule
    \PolyPlanCons     & $\Pi$, $\ell$           & $\pi \in \AP_\ell(\Pi)$                    & \NP-c     & \bcite{1}                            \\
    \PolyPlanBrave    & $\Pi$, $\ell$, $o$      & $\exists \pi \in \AP_\ell(\Pi): o \in \pi$ & \NP-c     & Lem.~\ref{obs:qualitative}           \\
    \PolyPlanCautious & $\Pi$, $\ell$, $o$      & $\forall \pi \in \AP_\ell(\Pi): o \in \pi$ & \co\NP-c  & Lem.~\ref{obs:qualitative}           \\
    \PolyTopK         & $\Pi$, $\ell$           & $\Card{\AP_\ell} \geq k$                   & \PP-h     & \bcite{2}                            \\ %
    \PolyPlanCount    & $\Pi$, $\ell$           & $\Card{\AP_\ell}$                          & \sharpP-c & \bcite{2}                            \\ %
    \ProbReasoning    & $\Pi$, $\ell$, $Q$, $p$ & $\prob_\ell[\Pi,Q]=p$                      & \CE-c     & Thm.~\ref{thm:prob}                  \\
    \midrule
    \FacetReason      & $\Pi$, $\ell$, $o$      & $o \in \FA_\ell(\Pi)$                      & $\NP$-c   & Thm.~\ref{the:complexity:facet_dec}  \\
    \kFacetAtLeast    & $\Pi$, $\ell$, $k$      & $\Card{\FA_\ell(\Pi)} \geq k$              & \NP-c     & Lem.~\ref{thm:atlestfacet}           \\
    \kFacetAtMost     & $\Pi$, $\ell$, $k$      & $\Card{\FA_\ell(\Pi)} \leq k$              & \co\NP-c  & Cor.~\ref{cor:complexity:atmostfacet}\\
    \kFacetWeight     & $\Pi$, $\ell$, $k$      & $\Card{\FA_\ell(\Pi)} = k$                 & $\DP$-c   & Thm.~\ref{thm:exactlyfacet}          \\
    \bottomrule
  \end{tabular}~                                                                                                                                                                       \\[-.5em]
  \caption{\textit{Computational Complexity of Qualitative and Quantitative Reasoning Problems.}
    We let $\Pi$ be a planning task, $\ell \in \mathbb \Nat_0$ with  $\ell \leq \poly(\Pi)$, $o \in \operators$,
    $k \in \Nat_o$, $0 \leq p \leq 1$, and $Q$ a query. %
    \bcite{1}: \protect\cite{bylander-aij1994}, \bcite{2}:
    {\protect\cite{speck-et-al-aaai2020}}.
  }
\end{table*}

\section{Preliminaries}
We assume that the reader is familiar with basics of propositional
logic~\cite{kleine-lettmann-1999} and computational
complexity~~\cite{papadimitriou-1994}.
Below, we follow standard
definitions~\cite{bylander-aij1994,speck-et-al-aaai2020}
to summarize basic notations for planning.

\paragraph{Basics}
For an integer~$i$, we define $[i] \eqdef \SB 0, 1, \ldots, i\SE$.  We
abbreviate the \emph{domain} of a
function~$f: \mathcal{D} \rightarrow \mathcal{R}$ by~$\dom(f)$. By
$f^{-1}: \mathcal{R} \rightarrow \mathcal{D}$ we denote the inverse
function~$f^{-1} \eqdef \SB f(d) \rightarrow d \SM d \in \dom(f)\SE$
of function~$f$, if it exists.
Let $\sigma = \langle s_1, s_2, \ldots, s_\ell \rangle$ be a sequence,
then we write $s \in \sigma$ if $s = s_i$ for
some~$1 \leq i \leq \ell$ and $\set({\sigma})$ the set of elements
that occur in~$\sigma$,~i.e.,
$\set({\sigma}) \eqdef \SB s \SM s \in \sigma \SE$.
For a propositional formula~$F$, we abbreviate by $\at(F)$ the
variables that occur in~$F$ and by $\Mod(F)$ the set of all models
of~$F$ and the number of models by $\#(F)\eqdef \Card{\Mod(F)}$.

\paragraph{Computational Complexity}
We follow standard terminology in computational
complexity~\cite{papadimitriou-1994} and
the Polynomial Hierarchy
(PH)~\cite{stockmeyer-meyer-stoc1973,stockmeyer-tcs1976,wrathall-tcs1976}.
The complexity class $\DP$ captures the (independent) combination of
an $\NP$ and a $\co\NP$
problem, i.e., %
$\DP\eqdef \SB L_1 \cap L_2 \SM L_1\in \NP, L_2 \in
\co\NP\SE$
\cite{papadimitriou-yannakakis-stoc1982}. %
Class $\PP$~\cite{gill-sicomp1977} refers to those decision
problems that can be characterized by a nondeterministic Turing machine,
such that the positive instances are those where at least $1/2$ of the machine's paths
are accepting.
Counting class $\sharpP$ captures counting problems that can be solved by counting the number of accepting
paths of a nondeterministic Turing machine~\cite{valiant-tcs1979}.
Class $\CE$~\cite{fenner-et-al-eccc1999} refers to decision problems that can be characterized
via nondeterministic Turing machines
where positive instances
are those with the same number of accepting and rejecting paths.

\paragraph{Classical Planning}
A \emph{planning task} is a tuple $\task = \langle \vars, \operators, \init,
\goal \rangle$, where $\vars$ is a finite set of propositional \emph{state
variables}.  A \emph{(partial) state} $s$ is a total (partial) mapping~$s: \vars
\to \{0,1\}$.
For a state~$s$ and a partial state~$p$, we write $s \models p$ if $s$
\emph{satisfies} $p$, more formally, $p ^{-1}(0) \subseteq s^{-1}(0)$
and $p ^{-1}(1) \subseteq s^{-1}(1)$.
$\operators$ is a finite set of \emph{operators},
where each operator is a tuple~$o = \langle \pre_o, \eff_o \rangle$ of
partial states, called \emph{preconditions} and \emph{effects}.
An operator~$o \in \operators$ is \emph{applicable} in a state~$s$ if
$s \models \pre_o$.  \emph{Applying} operator~$o$ to state~$s$,
$\res{s}{o}$ for short, yields state~$s'$, where
$s'(a) \eqdef \eff_o(a)$, if $a \in \dom(\eff_o)$ and
$s'(a) \eqdef s(a)$, otherwise.
Finally, $\init$ is the \emph{initial state} of~$\task$ and $\goal$
a partial state called \emph{goal condition}. A state $s_*$ is a \emph{goal
state} if $s_* \models \goal$.
Let $\task$ be a planning task.
A \emph{plan} $\plan = \langle o_0, \dots, o_{n-1}
\rangle$ is a sequence of applicable operators that \emph{generates} a
sequence of states $s_0, \dots, s_n$, where $s_0 = \init$,
$s_n$ is a goal state, and $s_{i+1} = \res{s_i}{o_i}$ for every~$i \in
[n-1]$. Furthermore, we let~$\pi(i) \eqdef
o_i$ and denote by $\len{\pi}$ the \emph{length} of a plan
$\pi$.
We denote the set of all plans
by~$\AP(\Pi)$ and the set of all plans of length at
most~$\ell$
by~$\AP_\ell(\Pi)$ and call it occasionally \emph{plan space} as done
in the literature~\cite{russell-norvig-1995}.

A plan~$\pi$ is \emph{optimal} if there is no plan~$\pi' \in
\AP(\Pi)$ where $\Card{\pi'} <
\Card{\pi}$. The notion naturally extends to bounded-length plans.
Deciding or counting plans is computationally hard. More precisely,
the \PlanCons problem, which asks to decide whether there exists a
plan of length at most~$\ell$, is \PSPACE-complete~\cite{bylander-aij1994}.
The \PlanCount problem, which asks to output the number of plans of
length at
most~$\ell$, remains \PSPACE-complete~\cite{speck-et-al-aaai2020}.
We say that a plan is \emph{polynomially bounded} if we restrict the
length to be polynomial in the instance size,~i.e., the
length~$\ell$ of~$\Pi$ is bounded by $\ell \leq
\|\Pi\|^c$ for some constant~$c$, where $\|\Pi\|$ is the encoding size of $\Pi$.
For a planning problem~$\mathbb{P}$ with
input~$\ell$ that bounds the length of a plan, we abbreviate by
$\textsc{Poly-}\mathbb{P}$ the problem~$\mathbb{P}$ where
$\ell$ is polynomially bounded.
Then, the complexity drops.
\PolyPlanCons is \NP-complete~\cite{bylander-aij1994} and
\PolyPlanCount is \sharpP-complete, and the decision problem~\PolyTopK
is \PP-hard, which asks to decide, given in addition an
integer~$k$, whether there are at
least~$k$ different plans of length up to~$\ell$~\cite{speck-et-al-aaai2020}.

\begin{figure}[t]
  \centering
  \begin{tikzpicture}[node distance={22mm}, main/.style = {draw, circle}]
    \node[main] (0) {$s_0$};
    \node[main] (1) [right of=0] {$s_1$};
    \node[main, yshift=5mm] (2) [above right of=1] {$s_2$};
    \node[main, yshift=-5mm] (3) [below right of=2]  {$s_3$};
    \node[main, double] (goal) [right of=3] {$s_*$};
    \node[main, yshift=3mm] (bot) [above right of=0] {$s_\bot$};
    \draw[->] (0,0.85) -- (0);
    \draw[->, thick] (0) -- node[midway, above] {wake-up} (1);
    \draw[->, thick] (1) -- node[midway, above, sloped] {get-ready} (2);
    \draw[->, thick] (1) -- node[midway, above, sloped] {go-to-AAAI} (3);
    \draw[->, thick] (2) -- node[midway, above, sloped] {go-to-AAAI} (3);
    \draw[->, thick] (3) -- node[midway, above, sloped] {give-talk} (goal);
    \draw[->, thick] (0) -- node[midway, above, sloped] {sleep} (bot);
  \end{tikzpicture}
\caption{State space of our running example task $\Pi_1$. The initial state is denoted by
  $s_0$; the goal state is denoted by $s_*$.}
\label{fig:running-example}
\end{figure}
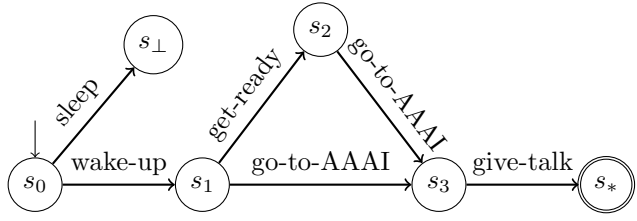

\begin{example}[Running Example]\label{ex:running}
  Consider a planning task~$\Pi_1$ consisting of a scenario with a slightly chaotic researcher, who has to wake up and give a
  talk at AAAI. Depending on how late they are, %
  they can go straight to
  the talk without any preparation. %
  However, they could also spend time getting ready.
  Less pleasant to the audience, they could also continue sleeping and
  not give the talk at all.
  Figure~\ref{fig:running-example} illustrates the  state space. The initial state is~$s_0$, and the single
goal state is~$s_*$. The labels in each edge identify the operator being
applied. We can easily identify two plans: %
\begin{enumerate}[label=(\roman*), leftmargin=1.75em]
\item \textnormal{wake-up; get-ready; go-to-AAAI; give-talk}.
\item \textnormal{wake-up; go-to-AAAI; give-talk}.
\end{enumerate}
Plan~\textit{(i)} has length~$4$, while Plan~\textit{(ii)} has length~$3$. Observe that action
\textnormal{sleep} does not appear in any plan.
\end{example}

\paragraph{Landmarks}
A \emph{fact landmark} is a state variable that occurs in every
plan~\cite{porteous-et-al-ecp2001}.
An \emph{operator landmark} is an operator that occurs in every plan
\cite{richter-et-al-aaai2008,karpas-domshlak-ijcai2009}. We can extend
these notions to \emph{bounded landmarks} where we assume bounded length~$\ell$.

\begin{example}%
  \label{ex:landmarks}
  Consider planning task~$\Pi_1$ from Example~\ref{ex:running}. We
  observe that \textnormal{wake-up}, \textnormal{go-to-AAAI}, and \textnormal{give-talk} are
  operator landmarks.
\end{example}

\newcommand{\sasp}{SAS$^+$}
\newcommand{\PPP}{\mathcal{P}}
\newcommand{\QQQ}{\mathcal{Q}}
\newcommand{\union}{\cup}

\paragraph{Planning as Satisfiability (SAT)}
Let $\task = \langle \vars, \operators, \init, \goal \rangle$ be a
planning task and $\ell>0$ an integer to bound the length of a
potential plan. We can employ a standard technique to encode finding a
plan into a propositional formula and ask for its satisfiability
(SAT)~\cite{kautz-selman-ecai1992,rintanen-aij2012}.
In more detail, we can construct a formula~$\fplan[\Pi]$ whose
models are in one-to-one correspondence with the $\ell$-bounded plans of \task.
For space reasons, we present only the core idea.
The variables are as follows:
$\at(\fplan)=\SB a^i \SM a \in \vars, i \in [\ell] \SE \SE \cup \SB
o^i \SM o \in \operators, i \in [\ell]\SE$.
Variable~$a^i$ indicates the value of state variable~$a$ at the $i$-th
step of the plan. Hence, if $M \in \Mod(\fplan[\Pi])$ and
$a^\ell \in M$, then state variable~$a$ has value~$1$ after
applying operators~$o^0,\ldots,o^{\ell-1}$ to the initial state.
We assume \emph{sequential encodings}, where the following constraints hold.
\begin{enumerate}
\item a set of clauses encoding the value of each state variable at the initial state;
\item a set of clauses encoding the value of each state variable in the goal condition;
\item a set of clauses guaranteeing that no two operators are chosen at the same
  step; and
\item a set of clauses guaranteeing the consistency of state variables
  after an
  operator is applied. %
  If $o^i$ is true and the effect of operator~$o$ makes $a$ true, then
  $a^{i+1}$ must be true.
\end{enumerate}
Since plans might be shorter than~$\ell$,
we move ``unused'' steps to the end using
the formula
$\bigwedge_{i \in [\ell]} (\bigwedge_{o \in \operators} \neg o^i
\rightarrow \bigwedge_{o \in \operators} \neg o^{i+1})$,
which encodes that if no operator was assigned at step~$i$, then no
operator can be assigned at step~$i+1$. Thereby, we obtain a one-to-one
mapping between models of~$\fplan[\Pi]$ and $l$-bounded plans for the task.

\newcommand{\assume}[2]{\ensuremath{#1}[#2]}
\newcommand{\val}{\ensuremath{\mathit{val}}}
\newcommand{\cg}[1][\lp]{\ensuremath{\mathcal{G}(#1)}}         %
\newcommand{\cga}[1][\ass]{\mathcal{G}_{\lp}^{#1}} %
\newcommand{\sddcompl}{\Phi_{\lp}}
\newcommand{\sddcomplL}{\Phi_{\assume{\lp}{L}}}

\section{From Qualitative to Quantitative Reasoning}
Classical planning aims at finding one plan or enumerating certain
plans.
But what if we want plans that contain a
certain operator, or to count the number of possible plans given certain
assumptions, or if we want to identify the frequency of an operator among
all possible plans?
Currently, there is no unified reasoning tool to deal with these types of
questions. We introduce more detailed qualitative and quantitative
reasoning modes for planning and analyze its %
complexity.
We start with two extreme reasoning modes that consider whether an
operator is part of some or all plans.
\begin{definition}
Let $\Pi = \langle \vars, \operators, \init, \goal \rangle$ be a
planning task, $o \in \operators$ an operator, and $\ell$ an integer.
We define the
\begin{itemize}
  \item \emph{brave} operator by
$\BC_\ell(\Pi) \eqdef \bigcup_{\pi \in \AP_\ell(\Pi)} \set(\pi)$ and
\item \emph{cautious} operator
  by~$\CC_\ell(\Pi) \eqdef \bigcap_{\pi \in \AP_\ell(\Pi)} \set(\pi)$.
\end{itemize}
The problem \PolyPlanBrave asks to decide whether
$o \in \BC_\ell(\Pi)$.
The problem \PolyPlanCautious asks to decide whether
$o \in \CC_\ell(\Pi)$.
\end{definition}
Note that we use $\set(\cdot)$ to convert sequences into sets, as we
aim only for %
an operator occurring at any time-point.

\begin{remark}\label{rem:landmarks}
  Our definition of cautious operators is similar
  to~\emph{operator
    landmarks}~\cite{zhu-givan-icaps2003dc},
  but for plans with up to a given bounded length.
\end{remark}

\begin{example}%
  Consider task~$\Pi_1$ from Example~\ref{ex:running} and Plans~(i)
  and (ii). Furthermore, let~$\ell=4$.
  Then, the brave and cautious operators of our task are the
  following:
  \begin{align*}
    \BC_\ell(\Pi_1) &= \{\textnormal{wake-up, get-ready, go-to-AAAI, give-talk}\},\\
    \CC_\ell(\Pi_1) &= \{\textnormal{wake-up, go-to-AAAI, give-talk}\}.
  \end{align*}
  Operator~$\textnormal{get-ready}$ is brave but not cautious, as it
  appears in Plan~(i) but not in Plan~\textit{(ii)}. Operator
  \textnormal{sleep} is neither brave nor cautious, as it does not
  appear in any plan.
\end{example}

\begin{restatable}[$\star$\protect\footnote{We prove statements marked by~``$\star$'' in the appendix.}]{lemma}{obsqualitative}\label{obs:qualitative}
  \label{}
  The problem~\PolyPlanBrave is \NP-complete and the
  problem~\PolyPlanCautious is \co\NP-complete.
\end{restatable}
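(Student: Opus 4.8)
The plan is to treat membership and hardness separately for each problem, and to lean throughout on the known \NP-completeness of \PolyPlanCons. Membership is immediate once we observe that a bounded plan is a compact certificate: since $\ell \leq \poly(\Pi)$, any $\pi \in \AP_\ell(\Pi)$ has polynomial length, and we can check in polynomial time that its operators are applicable along the induced state sequence, that the final state models $\goal$, and that $o \in \set(\pi)$. Hence \PolyPlanBrave is in \NP. For \PolyPlanCautious I would argue about the complement: $o \notin \CC_\ell(\Pi)$ holds iff some $\pi \in \AP_\ell(\Pi)$ has $o \notin \set(\pi)$, and such a $\pi$ is again a polynomial certificate; so the complement is in \NP and the problem is in \co\NP.

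For \NP-hardness of \PolyPlanBrave I would reduce from \PolyPlanCons. Given $\langle \Pi, \ell\rangle$ with $\Pi = \langle \vars, \operators, \init, \goal\rangle$, construct $\Pi'$ by adding a fresh variable $g$ (initially $0$) and a single fresh operator $o = \langle \goal, \{g \mapsto 1\}\rangle$ whose precondition is the original goal condition; set the new goal to $g = 1$ and $\ell' \eqdef \ell + 1$. Since $o$ is the only operator that can set $g$, every length-$\leq\ell'$ plan of $\Pi'$ must finish by applying $o$ in a state modeling $\goal$, so such a plan exists — and then necessarily contains $o$ — exactly when $\Pi$ has a plan of length $\leq \ell$. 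Thus $o \in \BC_{\ell'}(\Pi')$ iff $\langle\Pi,\ell\rangle$ is a yes-instance, and the reduction is polynomial, giving \NP-completeness.

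For \co\NP-hardness of \PolyPlanCautious I would reduce from plan non-existence (the complement of \PolyPlanCons), which is \co\NP-complete. Here the step I expect to be the main obstacle is the vacuous case: $\CC_{\ell'}(\Pi')$ is the whole operator set whenever $\AP_{\ell'}(\Pi') = \emptyset$, so the gadget must simultaneously force the plan set to be nonempty and make the distinguished operator $o$ avoidable precisely when $\Pi$ is solvable. I would add a fresh variable $d$ (initially $0$, new goal $d = 1$) together with two fresh operators: an always-applicable shortcut $o = \langle \emptyset, \{d \mapsto 1\}\rangle$ and a goal-gated $o_A = \langle \goal, \{d \mapsto 1\}\rangle$; set $\ell' \eqdef \ell + 1$. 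The one-step plan $\langle o\rangle$ is always valid, so $\AP_{\ell'}(\Pi') \neq \emptyset$. If $\Pi$ has a plan of length $\leq \ell$, appending $o_A$ gives a length-$\leq\ell'$ plan of $\Pi'$ that avoids $o$, so $o \notin \CC_{\ell'}(\Pi')$. Conversely, any plan of length $\leq \ell'$ avoiding $o$ must set $d = 1$ through its first application of $o_A$, whose precondition $\goal$ must then hold beforehand; the operators preceding that application all lie in $\operators$ and form a plan of $\Pi$ of length at most $\ell' - 1 = \ell$. Hence $o \in \CC_{\ell'}(\Pi')$ iff $\Pi$ has no plan of length $\leq \ell$, which establishes \co\NP-hardness. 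Verifying that $\|\Pi'\|$ and $\ell'$ stay polynomial is routine, completing both completeness claims.
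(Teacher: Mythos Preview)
Your proof is correct, and it takes a somewhat different and more self-contained route than the paper. For membership, the paper goes through the propositional encoding~$\fplan[\Pi]$: it conjoins a clause tying an indicator variable~$o$ to $\bigvee_i o^i$, then conjoins $o$ (respectively $\neg o$) and appeals to SAT; you instead use the standard guess-and-check argument directly on plans, which is more elementary and avoids the encoding machinery altogether. For hardness, the paper simply points to Bylander's SAT-to-planning reduction and remarks that one can ``vacuously extend'' it so that the chosen operator witnesses brave reasoning for SAT and cautious reasoning for UNSAT, leaving the gadgetry implicit. You instead reduce from \PolyPlanCons and its complement with explicit goal-gating constructions, and in the cautious case you take care to force $\AP_{\ell'}(\Pi')\neq\emptyset$ via the always-applicable shortcut~$o$ so that the intersection is not vacuously full --- a point the paper's sketch does not spell out. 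Both approaches work; yours is more detailed and stands on its own without consulting Bylander's construction, while the paper's is terser but presumes the reader can fill in the extension.
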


To find brave operators in practice, we can employ a standard
SAT~\cite{audemard-simon-ijait2018} or ASP
solver~\cite{gebser-et-al-lpnmr2011,gebser-et-al-arxiv2014,alviano-et-al-lpnmr2015}.
For cautious operators, we can employ a dedicated backbone
solver~\cite{biere-et-al-sat2023} or again ASP solvers.

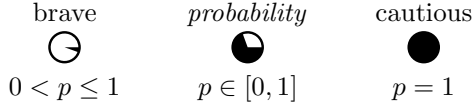
\begin{figure}
\centering
\begin{tabular}{c@{}c@{\hspace{2.5em}}c@{}c@{\hspace{2.5em}}c}
  & brave              & %
  & \emph{probability} & cautious      \\
  &
    \begin{tikzpicture}[every node/.style={circle,inner sep=1ex},scale=1.2]
      \node[fill=black] (as) at (0,0) [label={[text
        height=1.5ex,yshift=-2em]above:{}}] {};%
      \fill[white] (0,0) -- (340:1ex) arc (340:0:1ex) -- cycle;
    \end{tikzpicture}            &
  &
    \begin{tikzpicture}[every node/.style={circle,inner sep=1ex},scale=1.2]
      \node[fill=black] (as) at (0,0) [label={[text
        height=1.5ex,yshift=-2em]above:{}}] {};%
      \fill[white] (0,0) -- (110:1ex) arc (110:0:1ex) -- cycle;
    \end{tikzpicture}            &
                                   \begin{tikzpicture}[every node/.style={circle,inner sep=1ex},scale=1.2]
                                     \node[fill=black] (as) at (0,0) [label={[text
                                       height=1.5ex,yshift=-2em]above:{}}] {};%
                                   \end{tikzpicture}\\[-.75em]%
  & $0 < p \leq 1$    & %
                       & $p \in[0,1]$       & $p=1$
\end{tabular}~\\[-.5em]
\caption{Quantitative reasoning is a fine-grained reasoning mode
  between brave and cautious reasoning. It asks whether a literal
  matches~$\geq p\cdot 100\%$ of the plans for planning task~$\Pi$.}~\\[-1.38em]
\label{fig:torte}
\end{figure}

\subsection{Probability Reasoning}
Both problems~\PolyPlanBrave and \PolyPlanCautious give rise to
extreme reasoning modes on plans. Cautious reasoning is quite strict
and so unlikely to hold in general. %
Brave reasoning is too general and permissive, and thus quite weak in
practice.
Figure~\ref{fig:torte} illustrates the two reasoning modes and a more
fine-grained mode, which we introduce below. This new mode asks
whether the conditional probability of an operator is above a given
threshold.
It generalizes the known~\PolyTopK planning problem, which only asks
whether at least~$k$ plans exists. The crucial ingredient is counting
the number of possible plans and relating them to the number of
possible plans which contain a given operator.
More formally:
Let~$\Pi=\langle \vars, \operators, \init, \goal \rangle$ be a
planning task, $o$ be an operator.
We abbreviate the set of all plans of~$\Pi$ containing~$o$ by
$\AP_\ell(\Pi,o) \eqdef \SB \pi \SM \pi \in \AP_\ell(\Pi),o \in \pi
\SE$.
Then, we define the \emph{conditional probability} of~$o$ in plans
of~$\Pi$
by
\begin{align*}
\prob_\ell[\Pi,
  o]\eqdef\frac{\Card{\AP_\ell(\Pi,o)}}{\max(1,\Card{\AP_\ell(\Pi)})}.
\end{align*}
Note that the usage of~$\max$ prevents division by zero in case
of no possible plan.
Analogously, we can talk about operator~$o$ in position~$i$ by
replacing $o \in \pi$ with $o = \pi(i)$.
With the help of conditional probability, we can define a fine-grained reasoning mode. %
\newcommand{\query}{Q}

To be more flexible, we define \emph{a query~$\query$} as a
propositional formula in conjunctive normal form (CNF) and assume its
meaning as expected.
We let~$\query$ contain variables corresponding to the set~$\vars$ of
state variables, the set~$\operators$ of operators, as well as
of states and operators in position~$i$ (similar to $\mathsmaller\fplan$).
Let $\pi\in\AP_\ell(\Pi)$ be a plan with
$\plan = \langle o_0, \dots, o_{n-1} \rangle$ that generates
sequence~$s_0, \dots, s_n$.
$\pi$ \emph{satisfies} a variable~$v\in \vars$ if there is
some~$i \in [\ell]$ such that $s_i(v)=1$;
\emph{satisfies} an operator~$o \in \operators$ if there is
some~$i \in [\ell]$ such that $\pi(i) = o$, analogously for fixed
time-points $i$.
Then, $\pi$ satisfies $\neg v$ %
if $\pi$ does not satisfy $v$. %
A plan~$\pi$ satisfies a clause~$C$ in $\query$, if $\pi$ satisfies
one of its literals;
$\pi$ satisfies~$\query$, denoted $\pi\models Q$, if it satisfies
every clause in~$Q$.
We define $\AP_\ell(\Pi,Q) \eqdef \SB \pi \SM \pi \in \AP_\ell(\Pi), \pi\models Q
\SE$.

\begin{definition}[Probability Reasoning]
  Let~$\Pi=\langle \vars, \operators, \init, \goal \rangle$ be a
  planning task, $\ell>0$ be an integer, $\query$ be a query,
  and~$0 {\leq} p {\leq} 1$ with $p\in \mathbb{Q}$. Then,
  \emph{probability reasoning} on~$\query$ asks
  if~$\prob_\ell[\Pi, \query]= p$, where
  \begin{align*}
    \prob_\ell[\Pi,Q]\eqdef\frac{\Card{\AP_\ell(\Pi,Q)}}{\max(1,\Card{\AP_\ell(\Pi)})}.
  \end{align*}
\end{definition}

\begin{example}[Probability Reasoning]
  Again, consider planning task~$\Pi_1$ from Example~\ref{ex:running}
  and let $\ell=4$.
  Take the following probability reasoning queries:
  (i) $\prob_\ell[\Pi_1, \textnormal{wake-up}] = 1$,
  (ii) $\prob_\ell[\Pi_1, \textnormal{get-ready}] = 0.5$, and
  (iii) $\prob_\ell[\Pi_1, \textnormal{sleep}] = 0$.
  Reasoning (i) %
  illustrates that the researcher must always use operator
  \textnormal{wake-up} to reach a goal;
  (ii) indicates that \textnormal{get-ready} occurs in half of the
  plans;
  (iii)~allows us to conclude that no plan uses operator
  \textnormal{sleep}.
  More complex queries might ask for the probability of a plan
  containing both \textnormal{wake-up} and \textnormal{sleep}, or at
  least one of them:
  \begin{align*}
    &\prob_\ell[\Pi_1, \textnormal{wake-up} \land \textnormal{sleep}] = 0,\\
    &\prob_\ell[\Pi_1, \textnormal{wake-up} \lor \textnormal{sleep}] = 1.
  \end{align*}
\end{example}

Probability reasoning can be achieved by counting twice, which
is computationally hard. %
In more detail, we obtain:

\begin{restatable}[$\star$]{theorem}{thmprob}\label{thm:prob}
  The problem \ProbReasoning %
  is $\CE$-complete.
\end{restatable}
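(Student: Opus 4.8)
The plan is to pin \CE{} down through its standard characterization as the class of languages of the form \(\{x : h(x)=0\}\) for a \(\mathsf{GapP}\) function \(h\), equivalently the problems that ask whether two \(\sharpP\) counts coincide. For \emph{membership} the key observation is that probability reasoning is really \emph{counting twice}. Writing \(N_2 \eqdef \Card{\AP_\ell(\Pi)}\) and \(N_1 \eqdef \Card{\AP_\ell(\Pi,Q)}\), and \(p=a/b\in\mathbb{Q}\), the query \(\prob_\ell[\Pi,Q]=p\) is — whenever a plan exists — just the integer identity \(b\,N_1 = a\,N_2\). Both sides are \(\sharpP\): \(N_2\) is the number of models of the plan formula \(\fplan\) (this is exactly the \(\sharpP\)-completeness of \PolyPlanCount{}), and \(N_1\) is the number of models of \(\fplan \wedge Q'\), where \(Q'\) rewrites each query literal over state/operator/time-step variables into the variables of \(\fplan\) (an operator literal \(o\) becomes \(\bigvee_i o^i\), a state literal becomes \(\bigvee_i v^i\), etc., with Tseitin variables to stay polynomial). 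Since \(a,b\) are read off the input in \(\FP\) and \(\sharpP\subseteq\mathsf{GapP}\), the map \(x\mapsto b\,N_1(x)-a\,N_2(x)\) is a \(\mathsf{GapP}\) function, and testing whether it equals \(0\) is precisely the defining problem of \CE{}.

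The step I expect to be the main obstacle is the \(\max(1,\cdot)\) guard in the definition of \(\prob\), i.e.\ the empty-plan-space corner. If \(N_2=0\) then \(N_1=0\), so the gap \(b\,N_1 - a\,N_2\) vanishes irrespective of \(a\), whereas we must accept only when \(p=0\). The degenerate slice itself is benign: ``\(N_2=0\)'' is the complement of \PolyPlanCons{}, hence lies in \(\co\NP\subseteq\CE{}\), and there the answer is simply \([a=0]\in\Ptime\). The real difficulty is recombining this with the gap test while staying inside \CE{}, because on the complementary slice the surviving condition carries an NP flavour (``a plan exists''), so I cannot just intersect the two \CE{} pieces (\CE{} is closed under union and intersection via \(h_1 h_2\) and \(h_1^2+h_2^2\), but not under intersection with an arbitrary NP set). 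I would attempt to fold the guard directly into a single \(\mathsf{GapP}\) witness realizing \(b\,N_1 - a\max(1,N_2)\), or, failing that, reduce the whole instance in one shot to exact model counting and absorb the guard into the chosen target constant; getting this one gap/target right, rather than splitting into independent subproblems, is the technical heart of the membership direction.

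For \emph{hardness} I would reduce from the \CE{}-complete problem of deciding, given a CNF \(\phi\) over \(x_1,\dots,x_n\) and an integer \(t\), whether \(\#(\phi)=t\). I build a planning task \(\Pi\) whose \(\ell\)-bounded plans are in bijection with the \(2^n\) truth assignments: a chain in which the \(i\)-th step commits \(x_i\) by applying exactly one of two operators \(o_i^{\mathrm T},o_i^{\mathrm F}\), after which the goal is reached; with \(\ell=n\) this yields \(N_2=2^n\ge 1\), so the \(\max\) guard is inert. I then take the query \(Q\) to be \(\phi\) itself, with literal \(x_i\) replaced by ``\(o_i^{\mathrm T}\) is used'' and \(\neg x_i\) by ``\(o_i^{\mathrm F}\) is used'', so that a plan satisfies \(Q\) iff its assignment satisfies \(\phi\); hence \(N_1=\#(\phi)\). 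Setting \(p=t/2^n\) gives \(\prob_\ell[\Pi,Q]=\#(\phi)/2^n\), which equals \(p\) iff \(\#(\phi)=t\). This gadget is essentially the parsimonious \(\sharpSAT\)-to-plan encoding underlying \PolyPlanCount{}, refined so that the \emph{total} count is the clean value \(2^n\) while \(Q\) carves out exactly the \(\phi\)-models; note that here the corner case never arises, so the reduction is clean.

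The remaining obligations are routine: checking that \(Q'\) is polynomial and preserves the one-to-one plan/model correspondence, that the chain gadget admits exactly one plan per assignment and all of them reach the goal, and that the invoked \CE{}-completeness of exact counting (the analogue of \(\#\mathrm{SAT}\!\ge\!k\) being \PP{}-complete) applies as stated. I expect these to be straightforward, so that the genuine work concentrates in the membership corner case identified above.
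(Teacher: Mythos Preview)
Your membership argument mirrors the paper's: both cross-multiply $p=a/b$ to the integer identity $b\,\Card{\AP_\ell(\Pi,Q)}=a\,\Card{\AP_\ell(\Pi)}$, place each side in $\sharpP$ via $\fplan$ (together with a polynomial rewriting of $Q$ over its variables), and conclude that the equality test lies in $\CE$. You are in fact more careful than the paper about the $\max(1,\cdot)$ guard---the appendix proof simply asserts that the cross-multiplied identity ``clearly works in $\CE$'' and never discusses the slice $N_2=0$. One caveat on your proposed repair, however: $x\mapsto\max(1,N_2(x))$ is unlikely to lie in $\mathsf{GapP}$, since $\max(1,N_2)-N_2=[N_2{=}0]$ is the characteristic function of a $\co\NP$-complete set, and placing such functions in $\mathsf{GapP}$ would force $\NP\subseteq\CE$. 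So the ``single $\mathsf{GapP}$ witness for $b\,N_1-a\max(1,N_2)$'' route probably does not close the gap as written; but since the paper's own argument carries exactly the same lacuna, this is not a deficiency relative to the reference proof.

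For hardness the two routes differ. The paper reduces from the canonical $\CE$-complete problem of deciding whether a nondeterministic Turing machine has equally many accepting and rejecting paths: via parsimonious Cook--Levin it builds a formula with a distinguished variable~$acc$ whose two settings count the two path types, translates this parsimoniously into a planning task, and asks whether the \emph{single operator}~$acc$ has probability exactly~$\nicefrac{1}{2}$. You instead reduce from exact-$\sharpSAT$ (``is $\#(\phi)=t$?''), build a chain task with exactly $2^n$ plans, encode $\phi$ itself as the \emph{CNF query}~$Q$, and ask for probability~$t/2^n$. Both are valid. The paper's reduction buys the sharper statement that $\CE$-hardness already holds for single-literal queries and the fixed threshold $p=\nicefrac{1}{2}$, while yours is combinatorially tidier (the denominator is explicit and the degenerate slice is inert by construction) at the price of exploiting the full CNF query language.
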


\section{Faceted Reasoning}
Above, we %
introduced three different reasoning modes, namely brave, probability,
cautious reasoning. Unfortunately the most precise reasoning mode
---the probability mode--- is the computational most expensive one and
requires to count plans.
Therefore, we turn our attention to reasoning that is less hard than
probabilistic reasoning and allows us still to filter plans and
quantify uncertainty among plans. %
We call this reasoning \emph{faceted reasoning} following terminology
from combinatorics~\cite{papadimitriou-yannakakis-stoc1982} and
ASP~\cite{alrabbaa-et-al-rulemlpr2018}.
At the heart of these tasks is a combination of brave and cautious
reasoning.
These are particularly useful if we want to develop plans
gradually/incrementally to see at a given time point, which operators
are still possible or have the biggest effect.
We focus on operators
that %
belong to some (brave) but not to all plans (cautious).

More formally, for a planning task~$\Pi$ and an integer~$\ell$, we
let~$\FA^+_\ell(\Pi) \eqdef \BC_\ell(\Pi) \setminus \CC_\ell(\Pi)$ and
call the elements of~$\FA^+_\ell(\Pi)$ \emph{inclusive facets}.
In addition, we distinguish \emph{excluding facets} $\FA^-_\ell(\Pi)$, which indicate
that operators are not part of a plan. More formally, we let
$\FA^-_\ell \eqdef \SB \neg o \SM o \in \FA^+(\Pi)\SE$ and define the
set~$\FA_\ell(\Pi)$ of all facets by
$\FA_\ell(\Pi) \eqdef \FA^+_\ell(\Pi) \cup \FA^-_\ell(\Pi)$.
Interestingly, a facet~$p \in \{o,\neg o\}$ is directly related to
\emph{uncertainty}, since the operator~$o$ can either be included in
or be excluded from a plan.
When we \emph{enforce} that a \emph{facet}~$p \in \SB o, \neg o\SE$ is
present in a plan, which we abbreviate by~$\Pi[p]$, we immediately
reduce uncertainty on operators among the plans.
Based on this understanding, we define the notion
of~\emph{significance} for a planning
task~$\Pi=\langle \vars, \operators, \init, \goal \rangle$ and an
operator~$o \in \operators$: %
\newcommand{\signf}{\ensuremath{\mathbb{S}}}
\begin{align*}
  \signf_\ell(\Pi, o) \eqdef
  \frac{\Card{\FA_\ell(\Pi)} -  \Card{\FA_\ell(\Pi[o])}}{\Card{\FA_\ell(\Pi)}}.
\end{align*}
\noindent Note that the notion of significance is particularly
interesting when we already have a
prefix~$\omega_k = \langle o_0, \dots, o_k \rangle$ and are interested
in plans that complete the prefix. Here, facets can assist in
understanding which operator is the most significant for the next
step or some step in the future.
Furthermore, we can include state variables into significance
notations without effect on the complexity.
We omit these cases from the presentation due to space constraints and
readability of our introduced notion.

\subsection{Computational Aspects of Facets}
Next, we study the computational complexity for problems related to
facets.
We limit ourselves to including facets, assume the case where an
operator occurs in some step, and we omit prefixes in the following.
These restrictions have only a negligible effect on the complexity.
We start with a natural reasoning problem: %
The \FacetReason problem asks, given a planning task~$\Pi$ and an
operator $o \in \operators$, to decide whether $o \in \FA(\Pi)$.
We start with a lower and upper bound on the $\FacetReason$ problem.

\newcommand{\ft}[1]{\ensuremath{F_{\text{#1}}}}
\begin{restatable}[$\star$]{theorem}{thecomplexityfacetdec}\label{the:complexity:facet_dec}
  Let $\Pi$ be a planning task and $o \in \operators$. The problem
  \FacetReason is $\NP$-complete.
\end{restatable}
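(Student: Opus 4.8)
The plan is to prove the two bounds separately, relying on the characterization that, since $\FA^+_\ell(\Pi)=\BC_\ell(\Pi)\setminus\CC_\ell(\Pi)$, an operator satisfies $o\in\FA^+_\ell(\Pi)$ exactly when there exist \emph{two} bounded plans: one plan $\pi_1$ with $o\in\set(\pi_1)$ (witnessing $o\in\BC_\ell(\Pi)$) and one plan $\pi_2$ with $o\notin\set(\pi_2)$ (witnessing $o\notin\CC_\ell(\Pi)$). The existence of $\pi_1$ already forces $\AP_\ell(\Pi)\neq\emptyset$, so the intersection defining $\CC_\ell$ ranges over a nonempty family and this two-witness condition is exactly equivalent to facet membership. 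For membership in $\NP$ I would guess both certificates at once: since $\ell\leq\poly(\Pi)$, each candidate plan has polynomial length and hence polynomial size, verifying that a sequence of operators is applicable from $\init$ and reaches a goal state takes polynomial time, and checking $o\in\set(\pi_1)$ and $o\notin\set(\pi_2)$ is immediate. Guessing the pair $(\pi_1,\pi_2)$ and running these checks is an $\NP$ procedure.

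For $\NP$-hardness I would reduce from \PolyPlanCons, which is $\NP$-complete. Given $\Pi=\langle\vars,\operators,\init,\goal\rangle$ and bound $\ell$, I build $\Pi'$ by adding two fresh variables $b,f$ (both initially $0$) and three fresh operators: the test operator $o=\langle\{b{=}0\},\{b{=}1\}\rangle$; a \emph{finishing} operator $o_{\text{fin}}=\langle\goal\cup\{b{=}1\},\{f{=}1\}\rangle$; and a \emph{shortcut} operator $o_{\text{short}}=\langle\{b{=}0\},\{f{=}1\}\rangle$. The new goal is $\{f{=}1\}$, the new bound is $\ell'=\ell+2$, and the original operators together with the initial values on $\vars$ are kept unchanged. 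The claim is that $o\in\FA^+_{\ell'}(\Pi')$ iff $\Pi$ has a plan of length at most $\ell$.

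The correctness argument, which I also expect to be the main obstacle, is to pin down precisely which plans of $\Pi'$ exist. The shortcut $\langle o_{\text{short}}\rangle$ is always a valid plan of length $1$ avoiding $o$, so $o\notin\CC_{\ell'}(\Pi')$ holds unconditionally; this makes the non-cautious half free. For the brave half one must exclude two kinds of ``cheating'' plans: reaching $f$ through $o_{\text{fin}}$ without ever applying $o$ is impossible, because $o_{\text{fin}}$ requires $b{=}1$ and only $o$ sets $b$; and applying $o$ and then escaping through the shortcut is impossible, because $o$ makes $b{=}1$ while $o_{\text{short}}$ requires $b{=}0$. Hence any plan containing $o$ must set $b{=}1$ and can reach $f$ only via $o_{\text{fin}}$, whose precondition contains the full original goal $\goal$; since $o$ leaves the variables $\vars$ untouched, such a plan exists iff the original task is solvable, with the budget $\ell'=\ell+2$ absorbing the extra steps $o$ and $o_{\text{fin}}$. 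Thus $o\in\BC_{\ell'}(\Pi')$ iff $\Pi$ is a yes-instance, while $o\notin\CC_{\ell'}(\Pi')$ always, giving the equivalence; once the $b$-gating rules out the two interferences, the reduction is clearly polynomial.
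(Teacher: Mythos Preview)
Your membership argument is correct and matches the paper's idea (the paper routes through the SAT encoding $\fplan[\Pi]$ and conjoins a renamed copy, but this is just another way of guessing the same pair of witnesses $(\pi_1,\pi_2)$).

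Your hardness reduction, however, has a genuine gap. You claim that any plan of $\Pi'$ containing $o$ must reach $f{=}1$ via $o_{\text{fin}}$, because once $o$ sets $b{=}1$ the shortcut is blocked. But you only ruled out applying $o_{\text{short}}$ \emph{after} $o$; you did not rule out applying it \emph{before}. Concretely, $\langle o_{\text{short}},\, o\rangle$ is always a valid plan of $\Pi'$ of length $2\leq\ell'$: initially $b{=}0$, so $o_{\text{short}}$ fires and sets $f{=}1$; afterwards $b$ is still $0$, so $o$ fires and sets $b{=}1$; the resulting state still has $f{=}1$ and is therefore a goal state. This plan contains $o$ independently of whether $\Pi$ has any plan at all, so $o\in\BC_{\ell'}(\Pi')$ holds unconditionally and the claimed equivalence fails in the right-to-left direction.

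The repair is minor: add $f{=}0$ to the precondition of $o$. Then once some operator sets $f{=}1$, $o$ can never be applied afterwards, and your two exclusions really do cover all cases; extracting a length-$\ell$ plan for $\Pi$ from the original-operator segments of any $o$-containing plan then works exactly as you describe. With this patch you obtain a clean hardness proof that stays entirely inside planning, reducing from \PolyPlanCons, in contrast to the paper's route through SAT and the parsimonious SAT-to-planning translation.
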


Next, we look into counting facets and first observe that the number of
facets is bound by $0 \leq \Card{\FA(\Pi)} \leq \Card{\operators}$ for
a planning task~$\Pi$.
Therefore, we consider a parameterized version by taking a bound~$k$
on the number of facets as input.
Then, the problem \kFacetWeight asks, given a planning task~$\Pi$ and
an integer~$k$, to decide whether $\Card{\FA(\Pi)} = k$.
Before, we look into upper and lower bounds by the
problems~\kFacetAtLeast and \kFacetAtMost, which ask whether
$\Card{\FA(\Pi)} \geq k$ and $\Card{\FA(\Pi)} \leq k$, respectively.

\begin{restatable}[$\star$]{lemma}{thmatlestfacet}\label{thm:atlestfacet}
  Let $\Pi$ be a planning task, and $\ell\in\Nat$, $k \in \Nat_0$ be
  integers. \kFacetAtLeast is $\NP$-complete.
\end{restatable}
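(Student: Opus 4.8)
The plan is to first record the elementary identity $\Card{\FA_\ell(\Pi)} = 2\,\Card{\FA^+_\ell(\Pi)}$, which holds because $\FA^-_\ell(\Pi) = \SB \neg o \SM o \in \FA^+_\ell(\Pi)\SE$ is disjoint from $\FA^+_\ell(\Pi)$ and in bijection with it. Hence $\Card{\FA_\ell(\Pi)} \geq k$ is equivalent to $\Card{\FA^+_\ell(\Pi)} \geq \lceil k/2 \rceil$, so it suffices to reason about inclusive facets. Recall that $o \in \FA^+_\ell(\Pi) = \BC_\ell(\Pi) \setminus \CC_\ell(\Pi)$ precisely when some plan of length at most~$\ell$ uses~$o$ and some plan of length at most~$\ell$ avoids~$o$.

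For membership in $\NP$, I would guess $m \eqdef \lceil k/2 \rceil$ pairwise distinct operators $o_1, \dots, o_m \in \operators$ together with, for each $o_j$, two operator sequences $\pi_j$ and $\pi'_j$. Since we work in the polynomially bounded regime, $\ell \leq \poly(\Pi)$, each guessed sequence has polynomial length and the whole certificate has polynomial size. The verifier then checks in polynomial time that each $\pi_j,\pi'_j$ is a valid plan of length at most~$\ell$ (operators applicable along the induced state sequence, goal reached), and that $o_j \in \set(\pi_j)$ while $o_j \notin \set(\pi'_j)$. These witnesses certify $o_j \in \BC_\ell(\Pi)$ and $o_j \notin \CC_\ell(\Pi)$, i.e.\ $o_j \in \FA^+_\ell(\Pi)$; distinctness of the $o_j$ then yields $\Card{\FA^+_\ell(\Pi)} \geq m$.

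For $\NP$-hardness I would reduce from the yes/no question whether a single designated operator is an inclusive facet, i.e.\ from the core of \FacetReason (Theorem~\ref{the:complexity:facet_dec}), or directly from SAT via the brave-reasoning construction underlying Lemma~\ref{obs:qualitative}. Starting from a formula~$\phi$, the goal is to build a task~$\Pi_\phi$ with one distinguished operator~$o^\star$ such that $o^\star \in \FA^+_\ell(\Pi_\phi)$ iff $\phi$ is satisfiable, and then to set $k = 2$, so that $\Card{\FA_\ell(\Pi_\phi)} \geq 2$ holds iff $\phi$ is satisfiable. Concretely, I would attach to the brave gadget a fixed \emph{fallback} plan that never uses~$o^\star$ and is always applicable, forcing $o^\star \notin \CC_\ell(\Pi_\phi)$ unconditionally, while $o^\star \in \BC_\ell(\Pi_\phi)$ holds iff $\phi$ is satisfiable; combining the two gives $o^\star \in \FA^+_\ell(\Pi_\phi)$ iff $\phi$ is satisfiable.

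The main obstacle is controlling \emph{spurious} facets: I must ensure that $o^\star$ is the only operator that can be an inclusive facet, so that the facet count is exactly~$0$ or~$2$ and the threshold $k=2$ is meaningful. This forces the gadget to be designed so that every other operator is either an operator landmark (used by every plan of length at most~$\ell$, hence in $\CC_\ell(\Pi_\phi)$ and not a facet) or dead (used by no such plan, hence not in $\BC_\ell(\Pi_\phi)$); in particular, the fallback plan must be built only from operators that are forced into every plan. Verifying this all-or-nothing property for the auxiliary operators, and checking that the length bound~$\ell$ simultaneously admits a satisfying-assignment plan through~$o^\star$ and the fallback plan, is the delicate part of the argument.
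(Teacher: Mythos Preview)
Your membership argument is correct and more direct than the paper's. The paper instead encodes the question into SAT: it takes $k$ variable-disjoint copies of the formula~$\ft{facet}$ from Theorem~\ref{the:complexity:facet_dec} and conjoins lexicographic-ordering constraints forcing the $k$ witnessed facets to be pairwise distinct, so that a single satisfying assignment certifies $\Card{\FA_\ell(\Pi)}\geq k$. Your guess-and-check certificate (the $\lceil k/2\rceil$ operators together with two plans each) is the more natural $\NP$ upper bound; just make explicit that one may assume $k\leq 2\Card{\operators}$, since otherwise the instance is a trivial no and the certificate might not be polynomial in the input.

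For hardness the paper takes the opposite extreme: rather than isolating a single facet and asking $k=2$, it arranges that \emph{every} operator of the constructed task becomes a facet when the input formula is satisfiable and then sets $k=\Card{\operators}$, which sidesteps the control problem you raise. More importantly, you have misdiagnosed your own obstacle. For the threshold $k=2$ you do not need the facet count to be exactly $0$ or $2$; you only need it to be $0$ when $\phi$ is unsatisfiable and at least $2$ when $\phi$ is satisfiable, so extra facets in the satisfiable case are harmless. The actual danger lives entirely in the unsatisfiable case, and it is your fallback plan that creates it: you would have to argue that the fallback is then the \emph{unique} plan so that no operator becomes a facet. If you simply drop the fallback and reuse the construction of Theorem~\ref{the:complexity:facet_dec} (parsimonious SAT-to-planning plus an optional post-goal operator~$o^\star$), then unsatisfiability yields no plans and hence $\Card{\FA_\ell}=0$, while satisfiability makes $o^\star$ a facet and hence $\Card{\FA_\ell}\geq 2$; the ``delicate part'' disappears.
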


\begin{restatable}[$\star$]{corollary}{corcomplexityatmostfacet}\label{cor:complexity:atmostfacet}
  Let $\Pi$ be a planning task, $\ell\in\Nat$, $k \in \Nat_0$. Then, the problem
  \kFacetAtMost %
  is  $\co\NP$-complete.
\end{restatable}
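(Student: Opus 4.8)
The plan is to obtain this corollary directly from Lemma~\ref{thm:atlestfacet} via a routine complementation argument, exploiting that \kFacetAtMost and \kFacetAtLeast are complementary problems modulo a shift of the threshold. The key observation is that for every planning task~$\Pi$ and every $k \in \Nat_0$ we have $\Card{\FA(\Pi)} \leq k$ if and only if it is \emph{not} the case that $\Card{\FA(\Pi)} \geq k+1$. Equivalently, $(\Pi, k)$ is a yes-instance of \kFacetAtMost exactly when $(\Pi, k+1)$ is a no-instance of \kFacetAtLeast, and the map $k \mapsto k+1$ is trivially polynomial-time computable.

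For membership in $\co\NP$, I would combine this equivalence with the $\NP$-membership from Lemma~\ref{thm:atlestfacet}: since \kFacetAtLeast lies in $\NP$, its complement lies in $\co\NP$, and the equivalence above reduces \kFacetAtMost to that complement by incrementing~$k$. Hence \kFacetAtMost is in $\co\NP$.

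For $\co\NP$-hardness, I would reduce the complement of \kFacetAtLeast---which is $\co\NP$-complete by Lemma~\ref{thm:atlestfacet}---to \kFacetAtMost. Given an instance $(\Pi, k)$, the same equivalence shows that $\Card{\FA(\Pi)} < k$ holds iff $\Card{\FA(\Pi)} \leq k-1$, i.e.\ iff $(\Pi, k-1)$ is a yes-instance of \kFacetAtMost, so the assignment $(\Pi,k) \mapsto (\Pi, k-1)$ is the desired many-one reduction. The only point that needs care---and the single subtlety in an otherwise immediate argument---is the boundary value $k=0$: because $\Card{\FA(\Pi)} \geq 0$ holds vacuously, every \kFacetAtLeast instance with $k=0$ is a trivial yes-instance and therefore cannot witness hardness; the hard instances all satisfy $k \geq 1$, for which $k-1 \in \Nat_0$ is well defined and the reduction is sound. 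This establishes $\co\NP$-hardness and, together with membership, the claimed $\co\NP$-completeness.
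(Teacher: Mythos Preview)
Your proposal is correct and essentially identical to the paper's own argument: the paper simply observes that \kFacetAtMost is the co-problem of $\Card{\FA(\Pi)} \geq k+1$ and invokes Lemma~\ref{thm:atlestfacet}. Your write-up is more explicit about the threshold shift and the $k=0$ boundary case, but the underlying idea is the same complementation argument.
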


Both results together yield $\DP$-completeness.

\begin{restatable}[$\star$]{theorem}{thmexactlyfacet}\label{thm:exactlyfacet}
  Let $\Pi$ be a program, and $\ell\in\Nat$, $k \in \Nat_0$ be
  integers. The problem
  \kFacetWeight %
  is $\DP$-complete.
\end{restatable}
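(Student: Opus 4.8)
The plan is to prove the two directions separately: membership in $\DP$ follows immediately from the neighbouring bounds, while $\DP$-hardness needs a dedicated reduction from the canonical $\DP$-complete problem \textsc{Sat-Unsat}~\cite{papadimitriou-yannakakis-stoc1982}. For \emph{membership}, observe that for a fixed instance $(\Pi,\ell,k)$ we have $\Card{\FA_\ell(\Pi)} = k$ exactly when both $\Card{\FA_\ell(\Pi)} \geq k$ and $\Card{\FA_\ell(\Pi)} \leq k$. Hence, as languages over the same inputs, $\kFacetWeight = \kFacetAtLeast \cap \kFacetAtMost$. By Lemma~\ref{thm:atlestfacet} the first language lies in $\NP$ and by Corollary~\ref{cor:complexity:atmostfacet} the second lies in $\co\NP$, so the definition $\DP = \SB L_1 \cap L_2 \SM L_1 \in \NP, L_2 \in \co\NP \SE$ yields $\kFacetWeight \in \DP$ directly.

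For \emph{hardness}, I would reuse the gadget underlying the $\NP$-hardness of \FacetReason (Theorem~\ref{the:complexity:facet_dec}): from a CNF $\phi$ build a planning task $G_\phi$ with a designated operator $o_\phi$ such that (a) $G_\phi$ always admits a plan, via a default ``escape'' plan avoiding $o_\phi$; (b) $o_\phi$ is brave, i.e.\ used by some plan, precisely when $\phi$ is satisfiable; and (c) every other operator of $G_\phi$ is either a landmark or dead, hence never a facet. Then $o_\phi$ is the \emph{only} facet operator of $G_\phi$, and it is one iff $\phi$ is satisfiable; recalling $\Card{\FA_\ell} = 2\Card{\FA^+_\ell}$, the gadget contributes exactly $2$ to the facet count when $\phi$ is satisfiable and $0$ otherwise. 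The second ingredient is additivity under concatenation: if two gadgets with disjoint signatures are chained so that every plan first solves $G_1$ and then $G_2$, then for an operator $o$ of $G_1$ both bravery and cautiousness factor through the product (using that $G_2$ is always solvable), so $o$ is a facet of the combined task iff it is a facet of $G_1$, and symmetrically for $G_2$. Consequently facet counts add, and $\Card{\FA_\ell}$ of a chain equals the sum of the gadgets' facet counts plus the exactly countable facets contributed by the glue operators, all of which are landmarks.

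Given a \textsc{Sat-Unsat} instance $(\phi_1,\phi_2)$ --- a yes-instance iff $\phi_1$ is satisfiable and $\phi_2$ is unsatisfiable --- I would break the symmetry between the two formulas by assigning their gadgets \emph{different facet weights}. Concretely, chain one copy of $G_{\phi_1}$ (weight $2$) with two disjoint copies of $G_{\phi_2}$ (total weight $4$). Writing $c$ for the constant number of glue facets and $s_i \in \{0,1\}$ for the satisfiability status of $\phi_i$, additivity gives $\Card{\FA_\ell(\Pi)} = c + 2 s_1 + 4 s_2$. Choosing $k = c + 2$ makes $\Card{\FA_\ell(\Pi)} = k$ hold exactly in the case $s_1 = 1,\, s_2 = 0$, since the four reachable totals $c,\, c+2,\, c+4,\, c+6$ are pairwise distinct and only $c+2$ corresponds to ``$\phi_1$ satisfiable and $\phi_2$ unsatisfiable''. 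As everything is polynomially bounded, this is a polynomial-time many-one reduction, which together with membership establishes completeness.

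The main obstacle I expect is the \emph{calibration} of the gadget: I must guarantee not merely that $o_\phi$ becomes a facet when $\phi$ is satisfiable, but that the facet count is \emph{exactly} $2 s_i$ --- i.e.\ that no auxiliary operator (assignment-choice, clause-checking, escape, or glue operators) ever accidentally becomes brave-but-not-cautious. Each such operator must be argued to be a genuine landmark of the always-solvable task or to occur in no plan at all, uniformly in the satisfiability status of the formulas; combined with checking that the length bound $\ell$ (the sum of the sub-bounds, still polynomial) neither truncates intended plans nor admits spurious ones, this bookkeeping is where the real work lies, whereas the symmetry-breaking weight argument and the membership proof are routine.
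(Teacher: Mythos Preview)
Your membership argument coincides with the paper's.

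For hardness both reductions start from \textsc{Sat-Unsat}, but the gadgets differ. The paper does not build a single-facet gadget at all; it directly reuses the construction from Lemma~\ref{thm:atlestfacet}, which is engineered so that \emph{every} operator of $\Pi_F$ is a facet when $F$ is satisfiable and there are zero facets otherwise. Chaining $\Pi_{F_1}$ with $\Pi_{F_2}$ then lands the facet count in $\{0,\,m_1,\,m_2,\,m_1{+}m_2\}$ with $m_i=\Card{\operators_{F_i}}$, and (after ensuring $m_1\neq m_2$) one takes $k=m_1$. No duplication trick is needed because the four counts are already separated by the sizes of the two operator sets.

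Your plan is sound in outline, but the gadget you describe is harder to realise than you indicate. The Theorem~\ref{the:complexity:facet_dec} construction you intend to reuse does \emph{not} satisfy your property~(c): in the parsimonious SAT-to-planning reduction every propositional variable spawns a pair of choice operators (``set $x$ true'' / ``set $x$ false''), and once you add an escape branch for~(a) these choice operators become brave (they occur on the formula-checking branch whenever $\phi$ is satisfiable) but not cautious (they are absent from the escape plan), hence facets themselves. So when $\phi$ is satisfiable $G_\phi$ contributes far more than $2$ to the facet count, and the contribution depends on the number of variables, wrecking the $2s_1+4s_2$ arithmetic. Neutralising these extra facets is a genuine design problem, not the bookkeeping you anticipate; the paper sidesteps it by \emph{embracing} them --- making all operators facets in the satisfiable case --- rather than trying to isolate a single witness operator.
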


\section{Discussion: Applications of Plan Reasoning}

Our new reasoning modes %
offer a rich framework to query the solution
space of planning tasks. In Remark~\ref{rem:landmarks}, %
we discussed the connection between
landmarks and cautious reasoning. Similarly, with brave and cautious reasoning
it is easy to answer questions such as ``does operator $o$ appear on any
plan?'', or ``does partial state $p$ occur on any trajectory?''

The expressiveness of the queries goes way beyond %
and can be
leveraged in many existing planning techniques. For example, determining the set
of operators that are always or never part of a plan is important for learning
pruning functions \cite{gnad-et-al-aaai2019}. We can generalize these more
global queries to reason about operators being only (never) applied in states
that satisfy certain conditions, which is essential for learning policies
\cite{krajnansky-et-al-ecai2014,bonet-geffner-ijcai2015}. Furthermore, brave and
cautious reasoning can be helpful for model debugging, offering a convenient
tool to find out if an operator expected to occur in a plan does in fact never
appear \cite{lin-et-al-aaai2023,gragera-et-al-icaps2023}. In over-subscription
planning \cite{smith-icaps2004}, we can determine the achievability of soft
goals or compute the achievable maximum set of soft goals by answering multiple
queries. This can be utilized in explainable planning, providing reasons for the
absence of solutions that achieve the desired set of soft goals
\cite{eifler-et-al-aaai2020,krarup-et-al-jair2021}. We can even generalize the notion of soft goals to
desired state atoms that are achieved \emph{along} a plan, but which might no
longer hold in the goal.

With faceted reasoning, we are able to answer plan-space queries without actually
counting the number of solutions. This reduces the complexity of answering
queries to \NP-completeness, making reasoning much more practically usable. What
makes facet reasoning particularly interesting is that it allows to efficiently
answer conditional queries, such as ``if I want operator $o$ to occur at step
$k$, how much choice is left for the remaining operators?''.
Similar to previous
work in ASP, facet reasoning allows for an interactive querying mode in which
users can gain insights about the particular solution space of a planning task
\cite{fichte-et-al-aaai2022}. For tasks with a large
set of plans that cannot possibly be navigated manually, facets offer the
possibility to systematically navigate the solution space, narrowing down the
set of plans by committing to desired operators. The \toolname{Planalyst} tool,
which we describe in more detail in the next section, enables this form of
interactive exploration in the context of classical planning.

\section{Empirical Evaluation}
We implemented our %
reasoning framework for classical planning as a tool called \toolname{Planalyst}.
Therefore, we transform planning tasks into SAT formulas based on the \toolname{Madagascar} planner~\cite{rintanen-ipc2011,rintanen-ipc2014}.
To efficiently carry out counting, we use \toolname{d4}~\cite{lagniez-marquis-ijcai2017,audemard-et-al-sat2022}, which compiles (potentially large) formulas into a specialized normal form called \emph{d-DNNF} \cite{darwiche-marquis-jair2002}, enabling fast reasoning.
Finally, we reason %
over the plan space %
via counting queries using the \toolname{ddnnife} reasoner~\cite{sundermann-et-al-tsem2024}, which works in poly-time on d-DNNFs.

\subsection{Experimental Setup}
We focus on solving \PlanCount{}, i.e., counting the number of plans, %
which is the computationally hardest problem studied above. %
This %
allows us to address all reasoning questions discussed, including computing conditional probabilities.
For each task of the benchmark set, we defined an upper bound by collecting known bounds from \toolname{planning.domains} \cite{muise-icaps2016systemdemos} and running winning planners from the most recent International Planning Competitions (IPC) \cite{taitler-et-al-aimag2024}.
In the experiments, we count plans of length up to a multiplicative factor~$c \in \{1.0,1.1,1.2,1.3,1.4,1.5\}$ of the collected upper bounds.
We consider two different configurations for our approach: \toolname{Count}, which only counts the number of plans, and \toolname{Enum}, which additionally enumerates all plans, resulting in a novel top-quality planner for classical planning with unit operator costs.
For comparison, we have chosen two top-quality planners, \toolname{K^*} \cite{katz-et-al-icaps2018} and \toolname{SymK} \cite{speck-et-al-aaai2020}, both of which can be readily used to count the number of plans as they enumerate them, and both of which are considered to scale well to large numbers of plans.
We ran both baseline planners in their recommended configurations\footnote{We disabled a default optimization that removes operators causally irrelevant  to the goal, as it prunes valid plans.}: \toolname{K^*}, which implements orbit-space search \cite{katz-lee-ijcai2023} with the landmark-cut heuristic \cite{helmert-domshlak-icaps2009}, and \toolname{SymK}, which implements a variant of bidirectional symbolic search \cite{torralba-et-al-aij2017}. For enumeration approaches (\toolname{K^*}, \toolname{SymK}, \toolname{Enum}), we let these solvers enumerate the plans only internally to avoid writing billions (or more) of plans to the disk.
All experiments ran on Intel Xeon Silver 4114 processors running at 2.2 GHz.
We used a time limit of 30 minutes and a memory limit of 6 GiB per task.
Our benchmarks include all optimal planning domains from IPCs 1998-2023 with unit operator costs and without conditional effects or axioms.
Source code, benchmarks, and data are available online \cite{speck-et-al-zenodo2024}.

\begin{table}[t]

    \centering
        \newcommand{\rot}[1]{\rotatebox[origin=c]{90}{#1}}
\begin{tabular}{ccccc|ccc}
                               & \multicolumn{4}{c}{Coverage} & \multicolumn{3}{c}{\#Plans}                                                                                                                                                       \\
    \cmidrule(lr){2-5} \cmidrule(lr){6-8}
    \shortstack{Length\\Bound} & \rot{\toolname{K^*}}       & \rot{\toolname{SymK}}       & \rot{\textnormal{\ttfamily\bfseries Enum}} & \rot{\textnormal{\ttfamily\bfseries Count}} & \rot{Max}                 & \rot{Mean}                & \rot{Median}             \\
    \midrule
    $\times$ 1.0               & \textbf{351}                 & 309                         & 253                            & 335                             & $>$10\textsuperscript{15} & $>$10\textsuperscript{13} & $>$10\textsuperscript{2} \\
    $\times$ 1.1               & 289                          & 231                         & 182                            & \textbf{300}                    & $>$10\textsuperscript{15} & $>$10\textsuperscript{13} & $>$10\textsuperscript{4} \\
    $\times$ 1.2               & 212                          & 173                         & 130                            & \textbf{251}                    & $>$10\textsuperscript{15} & $>$10\textsuperscript{13} & $>$10\textsuperscript{5} \\
    $\times$ 1.3               & 177                          & 135                         & 101                            & \textbf{210}                    & $>$10\textsuperscript{18} & $>$10\textsuperscript{15} & $>$10\textsuperscript{5} \\
    $\times$ 1.4               & 142                          & 112                         & \phantom{0}77                  & \textbf{189}                    & $>$10\textsuperscript{21} & $>$10\textsuperscript{18} & $>$10\textsuperscript{6} \\
    $\times$ 1.5               & 112                          & \phantom{0}91               & \phantom{0}61                  & \textbf{170}                    & $>$10\textsuperscript{21} & $>$10\textsuperscript{18} & $>$10\textsuperscript{6} \\
    \bottomrule
\end{tabular}

    \caption{
        (Left): Coverage, i.e., the number of tasks where the number of plans within a multiplicative factor of a length bound was found by \toolname{K^*}, \toolname{SymK}, and our SAT-based approaches, \toolname{Count} and \toolname{Enum}. \toolname{Count} only counts plans, while \toolname{Enum} additionally enumerates them.
        (Right): Statistics on the number of plans in the benchmark set, considering the length bound determined by the four solvers.
      }
    \label{tab:overall_coverage}
\end{table}

\subsection{Overall Performance} %
Table~\ref{tab:overall_coverage} (left) compares the coverage,~i.e., the number of tasks for which different approaches can determine the number of plans, for different multiplicative length bounds. \toolname{K^*} has the best coverage for a length bound of $1.0$.
Our enumeration approach, \toolname{Enum}, ranks overall last, although being able to solve a notable number of tasks by first creating a d-DNNF, followed by a subsequent enumeration query for all models, and finally mapping them to actual plans.
For the 1.0 bound, our counting approach \toolname{Count} performs worse than \toolname{K^*}, but has better coverage than the \toolname{SymK} planner.
When considering higher length bounds, the counting approach, \toolname{Count}, has the highest coverage. The gap between \toolname{Count} and the other approaches gets larger as the length bound increases.
This can be explained by the increasing number of plans, %
see Table~\ref{tab:overall_coverage} (right), where enumeration becomes less feasible due to the large plan space.
This highlights the usefulness of our approach for sampling or reasoning in tasks with huge plan spaces. For example, in scenarios where end-users want to understand the plan space, enumerating over a sextillion ($10^{21}$) different plans is infeasible, but counting them (and using the related reasoning) is possible. Moreover, a decent performance with larger bounds gives us more flexibility for problems where a good bound is not easily available but an over-approximation is,~e.g., using a non-admissible heuristic to come up with a bound.

\begin{table}[!t]

    \centering
    \small
    \setlength{\tabcolsep}{3.5pt}
        \newcommand{\rot}[1]{\rotatebox[origin=c]{90}{#1}}
\begin{tabular}{lrrrr|rrrr}
                    & \multicolumn{4}{c}{Bound: $\times$1} & \multicolumn{4}{c}{Bound: $\times$1.5}                                                                                                                                                                                        \\
    \cmidrule(lr){2-5} \cmidrule(lr){6-9}
    Domains         & \rot{\toolname{K^*}}               & \rot{\toolname{SymK}}                  & \rot{\textnormal{\ttfamily\bfseries Enum}} & \rot{\textnormal{\ttfamily\bfseries Count}} & \rot{\toolname{K^*}} & \rot{\toolname{SymK}} & \rot{\textnormal{\ttfamily\bfseries Enum}} & \rot{\textnormal{\ttfamily\bfseries Count}} \\
    \midrule
    airport (49)    & 7                                    & 7                                      & 7                              & \textbf{11}                     & 7                      & 7                     & 6                              & \textbf{11}                     \\
    barman (14)     & \textbf{3}                           & 0                                      & 0                              & 0                               & \textbf{0}             & \textbf{0}            & \textbf{0}                     & \textbf{0}                      \\
    blocks (35)     & 28                                   & 31                                     & 29                             & \textbf{33}                     & 9                      & 8                     & 7                              & \textbf{15}                     \\
    childsnack (20) & \textbf{0}                           & \textbf{0}                             & \textbf{0}                     & \textbf{0}                      & \textbf{0}             & \textbf{0}            & \textbf{0}                     & \textbf{0}                      \\
    depot (22)      & \textbf{4}                           & 2                                      & 2                              & 3                               & 0                      & 0                     & 0                              & \textbf{1}                      \\
    driverlog (20)  & \textbf{10}                          & 8                                      & 6                              & 8                               & 1                      & 1                     & 1                              & \textbf{2}                      \\
    freecell (80)   & \textbf{15}                          & 13                                     & 5                              & 5                               & \textbf{0}             & \textbf{0}            & \textbf{0}                     & \textbf{0}                      \\
    grid (5)        & \textbf{2}                           & \textbf{2}                             & 1                              & 1                               & \textbf{1}             & 0                     & 0                              & \textbf{1}                      \\
    gripper (20)    & \textbf{3}                           & 2                                      & 2                              & \textbf{3}                      & 1                      & 1                     & 0                              & \textbf{2}                      \\
    hiking (20)     & 4                                    & 3                                      & 1                              & \textbf{7}                      & 0                      & 0                     & 0                              & \textbf{1}                      \\
    logistics (63)  & 9                                    & 6                                      & 4                              & \textbf{13}                     & 1                      & 1                     & 0                              & \textbf{3}                      \\
    miconic (150)   & \textbf{39}                          & 35                                     & 31                             & \textbf{39}                     & 14                     & 13                    & 10                             & \textbf{24}                     \\
    movie (30)      & 2                                    & 2                                      & 0                              & \textbf{30}                     & 0                      & 0                     & 0                              & \textbf{30}                     \\
    mprime (35)     & 22                                   & 20                                     & 22                             & \textbf{23}                     & \textbf{12}            & 7                     & 2                              & 9                               \\
    mystery (19)    & \textbf{16}                          & 14                                     & 14                             & 15                              & \textbf{11}            & 8                     & 7                              & 9                               \\
    nomystery (20)  & \textbf{14}                          & 13                                     & 8                              & 8                               & \textbf{5}             & 2                     & 1                              & 4                               \\
    organic (16)    & \textbf{7}                           & \textbf{7}                             & 0                              & 0                               & \textbf{7}             & \textbf{7}            & 0                              & 0                               \\
    parking (40)    & \textbf{3}                           & 1                                      & 0                              & 0                               & \textbf{0}             & \textbf{0}            & \textbf{0}                     & \textbf{0}                      \\
    pipes-nt (46)   & \textbf{16}                          & 11                                     & 10                             & 12                              & 2                      & 1                     & 1                              & \textbf{3}                      \\
    pipe-t (45)     & \textbf{9}                           & 7                                      & 5                              & 8                               & \textbf{2}             & 1                     & 1                              & \textbf{2}                      \\
    psr-small (50)  & 46                                   & 44                                     & 41                             & \textbf{48}                     & 14                     & 14                    & 8                              & \textbf{24}                     \\
    quantum (20)    & \textbf{10}                          & 8                                      & 9                              & 9                               & \textbf{2}             & 1                     & 1                              & \textbf{2}                      \\
    rovers (40)     & \textbf{4}                           & \textbf{4}                             & \textbf{4}                     & \textbf{4}                      & 0                      & 0                     & 0                              & \textbf{4}                      \\
    satellite (36)  & 5                                    & 5                                      & 5                              & \textbf{6}                      & \textbf{1}             & \textbf{1}            & 0                              & \textbf{1}                      \\
    snake (20)      & \textbf{6}                           & 5                                      & 1                              & 1                               & \textbf{2}             & 0                     & 0                              & 0                               \\
    storage (29)    & \textbf{16}                          & 15                                     & 12                             & 12                              & \textbf{7}             & 6                     & 5                              & \textbf{7}                      \\
    termes (20)     & 5                                    & \textbf{6}                             & 2                              & 2                               & \textbf{0}             & \textbf{0}            & \textbf{0}                     & \textbf{0}                      \\
    tidybot (40)    & \textbf{20}                          & 10                                     & 4                              & 5                               & \textbf{1}             & \textbf{1}            & \textbf{1}                     & \textbf{1}                      \\
    tpp (30)        & \textbf{5}                           & 4                                      & 4                              & \textbf{5}                      & 3                      & 3                     & 3                              & \textbf{4}                      \\
    visitall (40)   & 12                                   & \textbf{16}                            & \textbf{16}                    & \textbf{16}                     & 5                      & 5                     & 5                              & \textbf{6}                      \\
    zenotravel (20) & \textbf{9}                           & 8                                      & 8                              & 8                               & \textbf{4}             & 3                     & 2                              & \textbf{4}                      \\
    \midrule
    Sum (1094)      & \textbf{351}                         & 309                                    & 253                            & 335                             & 112                    & 91                    & 61                             & \textbf{170}                    \\
\end{tabular}

    \caption{
        Coverage per domain, i.e., number of tasks per domain where the number of plans within a factor $1.0$ or $1.5$ of a cost bound was found by \toolname{K^*}, \toolname{SymK}, and our SAT-based approaches, \toolname{Count} and \toolname{Enum}.
        \toolname{Count} only counts plans, while \toolname{Enum} outputs each plan. %
      }
    \label{tab:domain_coverage}
\end{table}

\subsection{Domain-Wise Performance}
Table \ref{tab:domain_coverage} shows a domain-wise comparison of the different approaches for the two extreme bounds in our experiments, $1.0$ and $1.5$.
For both bounds, the performance differs a lot depending on the domain.
Our SAT-based approach performs particularly well in the blocksworld and psr-small domains in both cases.
In blocksworld, the largest task that we could still solve had $1.5 \cdot 10^{9}$ plans, while in psr-small the largest solved task had $8.9 \cdot 10^{12}$.
In contrast, \toolname{K^*} could only count up to a 10 million plans in these domains.

The SAT-based approach is less effective in other domains.
One reason %
is that %
they are less specialized than heuristic and symbolic search approaches to optimal planning.
Among other factors, the sequential encoding is not concise enough for some tasks and bounds (e.g., airport), or the grounding algorithm of \toolname{Madagascar} is inferior to those of other planners built on top of the \toolname{Fast Downward} grounder \cite{helmert-jair2006,helmert-aij2009}, making it impossible to ground certain tasks (e.g., organic-synthesis). It would be interesting to evaluate how other encodings perform \cite{rintanen-aij2012}, but that brings the additional problem of losing the one-to-one correspondence between plans and SAT models.

For %
$1.5$, counting is more feasible than enumeration in many domains:
as the number of plans increases, enumeration becomes less practical.
Counting %
works %
for many reasoning tasks, e.g., those based on conditional probabilities.

\subsection{Beyond Counting}
As illustrated above, our \toolname{Planalyst} tool effectively counts plans by compiling  into a d-DNNF and performing a counting query.
This method can not only answer conditional probability questions, such as the quantity of an operator in plans, but also addresses other reasoning questions more directly and efficiently through d-DNNF queries using \toolname{ddnnife} \cite{sundermann-et-al-tsem2024}.
Consider %
reasoning questions about the plan space of a given planning task, while respecting a cost bound.
Given the d-DNNF representing the plan space, questions about brave and cautious operators can be answered directly, even without traversing the entire d-DNNF, when the number of plans is known \cite{sundermann-et-al-tsem2024}. 
This can be achieved by traversing the literal nodes of the d-DNNF and collecting the backbone variables, i.e., the variables that are always true (core) or false (dead).
In addition, given the d-DNNF, it is possible to uniformly sample plans without enumerating the full set by d-DNNF traversing with \toolname{ddnnife}.
This allows to address planning biases when selecting plans \cite{paredes-et-al-icapswsrddps2024,frank-et-al-icapswsrddps2024} and thus collect unbiased training data for different learning approaches \cite{shen-et-al-icaps2020,areces-icaps2023wskeps,chen-et-al-aaai2024,bachor-behnke-aaai2024}.
We omit empirical results for these queries, as their overhead is negligible once the d-DNNF is constructed.
Our experiments with the \toolname{Count} configuration of \toolname{Planalyst} have shown that this construction is feasible for many planning tasks.

\section{Conclusion and Future Work}

We %
count plans and reason in the solution space, which
is orthogonal to previous works in
planning~\cite{katz-et-al-icaps2018,speck-et-al-aaai2020,katz-sohrabi-aaai2020}.
Moreover, we reason about the plan space in the form of queries and
introduce faceted reasoning to planning allowing for questions on the
significance of operators. Although faceted reasoning is
computationally hard (\NP-c), it is, under standard theoretical
assumptions, significantly more efficient than counting the number of
plans (\sharpP-c).
Finally, we present our new reasoning tool, \toolname{Planalyst},
which can count the number of plans assuming fixed given length. It
also supports different plan space queries. In general,
\toolname{Planalyst} is competitive with state-of-the-art top-k
planners %
and outperforms all other methods
when the plan space is too large,~i.e., more than 10 million plans.

In the future, we will integrate \toolname{Planalyst} into other pipelines,
such as goal recognition \cite{mirsky-et-al-book2021}, grounding
via learning \cite{gnad-et-al-aaai2019}, and task rewriting
\cite{areces-et-al-icaps2014,elahi-rintanen-aaai2024}. We believe counting
and facet reasoning are useful for guidance in these areas.
Interesting topics for considerations could be to deal with inconsistencies~\cite{Ulbricht19}
and certifying results~\cite{AlvianoDodaroFichte19,FichteHecherRoland22}
as well as explaining reasoning behind decisions~\cite{CabalarFandinnoMuniz20}.
We will study how our framework extends to other encodings, such
as parallel operator encodings \cite{rintanen-aij2012} or lifted encodings
\cite{hoeller-behnke-icaps2022}.

\section{Acknowledgements}
Authors are ordered in reverse alphabetical order.
David Speck was funded by the Swiss National Science Foundation (SNSF) as part of the project ``Unifying the Theory and Algorithms of Factored State-Space  Search'' (UTA).
Hecher was supported by %
the Austrian Science Fund (FWF), grants J 4656 and P 32830, 
the Society for Research Funding in Lower Austria (GFF, Gesellschaft für Forschungsf\"orderung N\"O), grant ExzF-0004, 
as well as the Vienna Science and Technology Fund (WWTF), grant ICT19-065.
The work has been carried out while Hecher visited the Simons
Institute at UC Berkeley.
Fichte was funded by ELLIIT funded by the Swedish government.

{%

}

\cleardoublepage
\appendix
\section{Appendix}
\subsection{Additional Preliminaries}

\paragraph{Propositional Logic}
We define propositional (Boolean) formulas and their evaluation in the
usual way~\cite{kleine-lettmann-1999,RobinsonVoronkov01}.
\emph{Literals} are propositional variables or their negations.
For a propositional formula~$F$, we denote by $\at(F)$ the set of
variables that occur in formula~$F$.
Logical operators~$\wedge$, $\vee$, $\neg$ $\rightarrow$,
$\leftrightarrow$ are used in the usual meaning.
A \emph{term} is a conjunction ($\wedge$) of literals and a clause is
a \emph{disjunction} ($\vee$) of literals.
Formula~$F$ is in \emph{conjunctive normal form (CNF)} if $F$ is a
conjunction of clauses.
We abbreviate by $\Mod(F)$ the set of all models of~$F$ and the number
of models by $\#(F)\eqdef \Card{\Mod(F)}$.

\paragraph{Knowledge Compilation} %
Knowledge compilation is a sub-area of automated reasoning and
artificial intelligence where one transforms propositional formulas
into certain normal forms on which reasoning operations such as
counting can be carried out in polynomial
time~\cite{darwiche-marquis-jair2002,DarwicheMarquis24}.
In our case, the general outline for a given planning task~$\Pi$ is as
follows:
\newcommand{\NNF}{\ensuremath{F_{\text{NF}}}}
\begin{enumerate}
\item We construct the propositional CNF formula~$\fplan[\Pi]$.
\item Then, we compile~$\fplan[\Pi]$ in a computationally expensive
  step into a formula~$\NNF$ in a normal form, so-called
  d-DNNF by existing knowledge compilers.
\item Finally, on the formula~$\NNF$ counting (and other operations)
  can be done in polynomial time in the size of~$\NNF$.  We can even
  count under a set~$L$ of propositional assumptions by the technique
  known as conditioning.
\end{enumerate}
In more detail:
Let $F$ be a (propositional) formula, $F$ is in \emph{NNF
  (negation normal form)} if negations ($\neg$) occur only directly in
front of variables and the only other operators are conjunction
($\wedge$) and disjunction
($\vee$)\longversion{~\cite{RobinsonVoronkov01}}.  NNFs can be
represented in terms of {\em rooted directed acyclic graphs} (DAGs)
where each leaf node is labeled with a literal, and each internal node
is labeled with either a conjunction (\emph{$\wedge$-node}) or a
disjunction (\emph{$\vee$-node}).
The \emph{size of an NNF}~$F$, denoted by~$|F|$, is given by the
number of edges in its DAG.  Formula~$F$ is in \emph{DNNF}, if it is
in NNF and it satisfies the \emph{decomposability} property, that is,
for any distinct sub-formulas $F_i, F_j$ in a conjunction
$F=F_1\land\dots \land F_n$ with $i\neq j$, we have
$\at(F_i)\cap\at(F_j)=\emptyset$\longversion{~\cite{Darwiche04a}}.
Formula $F$ is in \emph{d-DNNF}, if it is in DNNF and it satisfies the
\emph{decision} property, that is, disjunctions are of the form
$F=(x\land F_1)\lor(\neg x\land F_2)$.  Note that $x$ does not occur
in $F_1$ and $F_2$ due to decomposability. $F_1$ and $F_2$ may be
conjunctions.  Formula $F$ is in \emph{sd-DNNF}, if all disjunctions
in $F$ are smooth, meaning for $F=F_1\lor F_2$ we have
$\at(F_1) = \at(F_2)$.
Determinism and smoothness permit traversal operators on sd-DNNFs to
count models of~$F$ in linear time
in~$|F|$\longversion{~\cite{darwiche2001tractable}}. The traversal
takes place on the so-called counting graph of an sd-DNNF.  The {\em
  counting graph}~$\mathbb{G}(F)$ is the DAG of~$F$ where each
node~$N$ is additionally labeled by $\val(N) \eqdef 1$, if $N$
consists of a literal; labeled by
$\val(N) \eqdef \Sigma_{i} \val(N_i)$, if $N$ is an $\vee$-node with
children~$N_i$; labeled by $\val(N) \eqdef \Pi_{i} \val(N_i)$, if $N$
is an $\wedge$-node.  By~$\val(\mathbb{G}(F))$ we refer to~$\val(N)$
for the root~$N$ of~$\mathbb{G}(F)$.  Function $\val$ can be
constructed by traversing $\mathbb{G}(F)$ in post-order in polynomial
time.
It is well-known that $\val(\mathbb{G}[F])$ equals the model count of~$F$.
For a set~$L$ of literals, counting of
$F^L \eqdef F \wedge \bigwedge_{\ell \in L}\ell$ can be carried out by
\emph{conditioning} of~$F$ on~$L$~\cite{darwiche-ijcai1999}.
Therefore, the function~$\val$ on the counting graph is modified by
setting $\val(N) = 0$, if~$N$ consists of~$\ell$
and~$\neg \ell \in L$.  This corresponds to replacing each
literal~$\ell$ of the NNF~$F$ by constant~$\bot$ or $\top$,
respectively.
Similarly, we can enumerate models or compute brave/cautious
operators.

\section{Omitted Proofs}
\obsqualitative*
\begin{proof}[Proof (Sketch)]
  (Membership): Let
  $\Pi$ %
  be a planning task, $o \in \operators$ an operator, and
  $\ell$ an integer.
  We can simply conjoin the
  formula~$\big((\bigvee_{i \in [\ell]} o^\ell)\rightarrow o\big)$ to
  formula~$\fplan[\Pi]$, which ensures that variable~$o$ is true if
  operator~$o$ occurs in a plan in position~$i$ of a plan.
  (Remember that $\ell$ is guaranteed to be polynomially bounded, so $\fplan[\Pi]$ is also polynomial.)
  For brave operators, we
  conjoin~$o$ and ask whether the resulting formula is satisfiable,
  which gives $\NP$-membership.
  For cautious operators, we conjoin~$\neg
  o$, ask for satisfiability, and swap answers, which immediately
  yields $\co\NP$-membership.
  (Hardness): We can vacously extend the existing reduction \cite[The
  3.5]{bylander-aij1994} and ask for brave (SAT) and cautious (UNSAT).
\end{proof}

\thmprob*
\begin{proof}

	(Hardness): We reduce from the problem of deciding whether the number of accepting
paths of a non-deterministic Turing machine equals its number of rejecting paths, see, e.g.,~\cite{fenner-et-al-eccc1999}.
	Indeed, we can encode Turing machine acceptance into a propositional formula using the Cook-Levin reduction~\cite{cook-stoc1971} which is parsimonious,
	i.e., the number of satisfying assignments precisely preserve the number of accepting paths~\cite[Lemma 3.2]{valiant-tcs1979}.
	Analogously, one can encode the number of rejecting paths in a propositional formula, by inversion and De Morgan's law (or Tseitin transformation).
	Consequently, we can also construct a formula~$F$ having $\# acc paths$ satisfying assignments if $acc$ is set to true and $\# rej paths$
	satisfying assignments in case $acc$ is set to false. %
	Observe that we can solve the Turing machine counting problem by asking whether $\nicefrac{\# acc paths}{\# acc paths + \# rej paths} = 0.5$, %
	which boils down to asking whether $\nicefrac{\#(F \cup \{acc\})}{\#(F)} = 0.5$.
	Indeed, this can be solved via probabilistic reasoning,
	by parsimoniously reducing $F$ into a planning problem~\cite{speck-et-al-aaai2020} and asking for the probability of operator $acc$. %

	(Membership): We reduce
        $\nicefrac{\Card{\AP_\ell(\Pi,Q)}}{\max(1,\Card{\AP_\ell(\Pi)})}$ %
        ${=\,}p {\,=\,}\nicefrac{n}{d}$ for a given query~$Q$ to asking whether
        $d\Card{\AP_\ell(\Pi,Q)}=n\Card{\AP_\ell(\Pi)}$. This clearly
        works in
        $\CE$ by parsimoniously reducing both planning counting tasks
        to a propositional formula and checking for equality of their
        number of satisfying assignments.
\end{proof}

\thecomplexityfacetdec*
\begin{proof}
  (Membership): As in Lemma~\ref{obs:qualitative}, we encode
  task~$\Pi$, %
  operator $o \in \operators$, and integer
  $\ell$ into a propositional formula $\fplan[\Pi]$.
  Then, we let $F\eqdef \big((\bigvee_{i \in [\ell]} o^\ell)\rightarrow o\big) \wedge \fplan[\Pi]$, which ensures that variable~$o$ is true if
  operator~$o$ occurs in a plan in position~$i$ of a plan.
  Then, we construct a fresh formula~$F'$ where every variable~$v$ in $F$ is renamed to a fresh variable~$v'$.
  Finally, formula~$\ft{facet}\eqdef F \wedge o \wedge F' \wedge \neg o'$ is satisfiable if and only if $o\in \FA(\Pi)$.

  (Hardness): Take any propositional formula~$F$. We ensure that $F$ is not a tautology,
  by adding to~$F$ the trivial clause $\neg v$ over fresh variable~$v$, which results in formula~$F'$.
  Then, we parsimoniously reduce a propositional formula $F'$ into a planning problem~$\Pi = \langle \vars, \operators, \init, \goal \rangle$~\cite{speck-et-al-aaai2020}. %
  In particular, this translation ensures a one-to-one correspondence between the satisfying assignments of~$F'$ and the plans of~$\Pi$.
  Observe that we can slightly adapt this planning instance, resulting in~$\Pi'$, where we add a single goal operator $o$
  that is applicable if precondition $\goal$ holds.
  Then, $F$ is satisfiable if and only if $o \in \FA(\Pi')$.
\end{proof}

\thmatlestfacet*
\begin{proof}
(Membership): Follows from the membership of the proof of Theorem~\ref{the:complexity:facet_dec}.
Indeed, we take the constructed formula~$\ft{facet}$ and conjunctively cojoin it~$k$ times (over fresh variables),
resulting in a formula~$\ft{facet}^{\geq k}=\ft{facet}^1\wedge \ldots \wedge \ft{facet}^k$.
Assume an arbitrary, but fixed, ordering~$\prec$ among the variables in~$\ft{facet}$, which we naturally extend over any copy formula~$\ft{facet}^i$.
Then, for~$2\leq i\leq k$ we encode that the satisfying assignment over copy $\ft{facet}^i$ is $\prec$-larger than the satisfying assignment over $\ft{facet}^{i-1}$. This is the case if there is variable~$v_i$ in $\ft{facet}^i$ that is set to true, but $v_{i-1}$ in $\ft{facet}^{i-1}$ is set to false, such that all $\prec$-larger variables in $\ft{facet}^{i-1}$ are set to false.

(Hardness): We reduce from an arbitrary propositional formula~$F$ to a planning task~$\Pi_F$.
We apply the same approach as in Theorem~\ref{the:complexity:facet_dec}, but
we need to make every facet candidate into a facet, which then allows us to ask for $\geq |\operators_F|$ (all) facets.
\end{proof}

\corcomplexityatmostfacet*
\begin{proof}
This is the co-Problem of $\Card{\FA(\Pi)} \geq k+1$; therefore the result follows
directly from Lemma~\ref{thm:atlestfacet}.
\end{proof}

\thmexactlyfacet*
\begin{proof}
(Membership): Follows from memberships of Lemma~\ref{thm:atlestfacet} and Corollary~\ref{cor:complexity:atmostfacet}.

(Hardness): Follows from hardness of Lemma~\ref{thm:atlestfacet} and Corollary~\ref{cor:complexity:atmostfacet}. Indeed, we can reduce from arbitrary propositional formulas~$F_{\text{sat}}, F_{\text{unsat}}$ to decide whether
$F_{\text{sat}}$ is satisfiable and $F_{\text{unsat}}$ is unsatisfiabile, if and only if for planning task~$\Pi_{F_\text{sat}}$ all candidate facets are facets ($\geq |\operators_{F_\text{sat}}|$), but not all candidate facets ($\leq |\operators_{F_\text{unsat}}|-1$) for $F$ are facets for~$\Pi_{F_\text{unsat}}$.
\end{proof}


\begin{thebibliography}{}

\bibitem[\protect\citeauthoryear{Alrabbaa \bgroup \em et al.\egroup
  }{2018}]{alrabbaa-et-al-rulemlpr2018}
Christian Alrabbaa, Sebastian Rudolph, and Lukas Schweizer.
\newblock Faceted answer-set navigation.
\newblock In {\em Proc.\ RuleML+RR 2018}, pages 211--225, 2018.

\bibitem[\protect\citeauthoryear{Alviano \bgroup \em et al.\egroup
  }{2015}]{alviano-et-al-lpnmr2015}
Mario Alviano, Carmine Dodaro, Nicola Leone, and Francesco Ricca.
\newblock Advances in {WASP}.
\newblock In Francesco Calimeri, Giovambattista Ianni, and Miroslaw
  Truszczynski, editors, {\em Proceedings of the Thirteenth Conference on
  Programming and Nonmonotonic Reasoning (LPNMR 2015)}, pages 40--54, 2015.

\bibitem[\protect\citeauthoryear{Alviano \bgroup \em et al.\egroup
  }{2019}]{AlvianoDodaroFichte19}
Mario Alviano, Carmine Dodaro, Johannes~Klaus Fichte, Markus Hecher, Tobias
  Philipp, and Jakob Rath.
\newblock Inconsistency proofs for {ASP:} the {ASP} - {DRUPE} format.
\newblock {\em Theory Pract. Log. Program.}, 19(5-6):891--907, 2019.

\bibitem[\protect\citeauthoryear{Alviano \bgroup \em et al.\egroup
  }{2023}]{alviano-et-al-aij2023}
Mario Alviano, Carmine Dodaro, Salvatore Fiorentino, Alessandro Previti, and
  Francesco Ricca.
\newblock {ASP} and subset minimality: Enumeration, cautious reasoning and
  {MUS}es.
\newblock {\em Artificial Intelligence}, 320:103931, 2023.

\bibitem[\protect\citeauthoryear{Areces \bgroup \em et al.\egroup
  }{2014}]{areces-et-al-icaps2014}
Carlos Areces, Facundo Bustos, Mart{\'\i}n~Ariel Dominguez, and J{\"o}rg
  Hoffmann.
\newblock Optimizing planning domains by automatic action schema splitting.
\newblock In Steve Chien, Alan Fern, Wheeler Ruml, and Minh Do, editors, {\em
  Proceedings of the Twenty-Fourth International Conference on Automated
  Planning and Scheduling (ICAPS 2014)}, pages 11--19. AAAI Press, 2014.

\bibitem[\protect\citeauthoryear{Areces \bgroup \em et al.\egroup
  }{2023}]{areces-icaps2023wskeps}
Felipe Areces, Benjamin Ocampo, Carlos Areces, Mart{\'{\i}}n Dom{\'{\i}}nguez,
  and Daniel Gnad.
\newblock Partial grounding in planning using small language models.
\newblock In {\em ICAPS 2023 Workshop on Knowledge Engineering for Planning and
  Scheduling}, 2023.

\bibitem[\protect\citeauthoryear{Audemard and
  Simon}{2018}]{audemard-simon-ijait2018}
Gilles Audemard and Laurent Simon.
\newblock On the glucose {SAT} solver.
\newblock {\em Int. J. Artif. Intell. Tools}, 27(1):27, 2018.

\bibitem[\protect\citeauthoryear{Audemard \bgroup \em et al.\egroup
  }{2022}]{audemard-et-al-sat2022}
Gilles Audemard, Jean{-}Marie Lagniez, and Marie Miceli.
\newblock A new exact solver for (weighted) max{\#}sat.
\newblock In Frisch and Gregory \shortcite{sat2022}, pages 28:1--28:20.

\bibitem[\protect\citeauthoryear{Aziz \bgroup \em et al.\egroup
  }{2015}]{aziz-et-al-aaai2015}
Rehan~Abdul Aziz, Geoffrey Chu, Christian Muise, and Peter Stuckey.
\newblock Stable model counting and its application in probabilistic logic
  programming.
\newblock In Blai Bonet and Sven Koenig, editors, {\em Proceedings of the
  Twenty-Ninth {AAAI} Conference on Artificial Intelligence ({AAAI} 2015)},
  pages 3468--3474. {AAAI} Press, 2015.

\bibitem[\protect\citeauthoryear{Bachor and
  Behnke}{2024}]{bachor-behnke-aaai2024}
Pascal Bachor and Gregor Behnke.
\newblock Learning planning domains from non-redundant fully-observed traces:
  Theoretical foundations and complexity analysis.
\newblock In Dy and Natarajan \shortcite{aaai2024}, pages 20028--20035.

\bibitem[\protect\citeauthoryear{Biere \bgroup \em et al.\egroup
  }{2023}]{biere-et-al-sat2023}
Armin Biere, Nils Froleyks, and Wenxi Wang.
\newblock {CadiBack}: Extracting backbones with {CaDiCaL}.
\newblock In Mahajan and Slivovsky \shortcite{sat2023}, pages 3:1--3:12.

\bibitem[\protect\citeauthoryear{Boddy \bgroup \em et al.\egroup
  }{2005}]{boddy-et-al-icaps2005}
Mark Boddy, Johnathan Gohde, Tom Haigh, and Steven Harp.
\newblock Course of action generation for cyber security using classical
  planning.
\newblock In Susanne Biundo, Karen Myers, and Kanna Rajan, editors, {\em
  Proceedings of the Fifteenth International Conference on Automated Planning
  and Scheduling ({ICAPS} 2005)}, pages 12--21. AAAI Press, 2005.

\bibitem[\protect\citeauthoryear{B{\"{o}}hl \bgroup \em et al.\egroup
  }{2023}]{boehl-et-al-ecai2023}
Elisa B{\"{o}}hl, Sarah~Alice Gaggl, and Dominik Rusovac.
\newblock Representative answer sets: Collecting something of everything.
\newblock In Kobi Gal, Ann Now{\'e}, Grzegorz~J. Nalepa, Roy Fairstein, and
  Roxana R\u{a}dulescu, editors, {\em Proceedings of the 26th {European}
  Conference on {Artificial} {Intelligence} ({ECAI} 2023)}, pages 271--278. IOS
  Press, 2023.

\bibitem[\protect\citeauthoryear{Bonet and
  Geffner}{2015}]{bonet-geffner-ijcai2015}
Blai Bonet and Hector Geffner.
\newblock Policies that generalize: Solving many planning problems with the
  same policy.
\newblock In Qiang Yang and Michael Wooldridge, editors, {\em Proceedings of
  the 24th International Joint Conference on Artificial Intelligence (IJCAI
  2015)}, pages 2798--2804. AAAI Press, 2015.

\bibitem[\protect\citeauthoryear{Bylander}{1994}]{bylander-aij1994}
Tom Bylander.
\newblock The computational complexity of propositional {STRIPS} planning.
\newblock {\em Artificial Intelligence}, 69(1--2):165--204, 1994.

\bibitem[\protect\citeauthoryear{Cabalar \bgroup \em et al.\egroup
  }{2020}]{CabalarFandinnoMuniz20}
Pedro Cabalar, Jorge Fandinno, and Brais Mu{\~n}iz.
\newblock A system for explainable answer set programming.
\newblock {\em Electronic Proceedings in Theoretical Computer Science},
  325:124--136, September 2020.

\bibitem[\protect\citeauthoryear{Chakraborti \bgroup \em et al.\egroup
  }{2024}]{chakraborti-et-al-aaai2024}
Tathagata Chakraborti, Jungkoo Kang, Francesco Fuggitti, Michael Katz, and
  Shirin Sohrabi.
\newblock Interactive plan selection using linear temporal logic, disjunctive
  action landmarks, and natural language instruction.
\newblock In Dy and Natarajan \shortcite{aaai2024}, pages 23775--23777.

\bibitem[\protect\citeauthoryear{Chen \bgroup \em et al.\egroup
  }{2024}]{chen-et-al-aaai2024}
Dillon~Z. Chen, Sylvie Thi{\'e}baux, and Felipe Trevizan.
\newblock Learning domain-independent heuristics for grounded and lifted
  planning.
\newblock In Dy and Natarajan \shortcite{aaai2024}, pages 20078--20086.

\bibitem[\protect\citeauthoryear{Conitzer and Sha}{2020}]{aaai2020}
Vincent Conitzer and Fei Sha, editors.
\newblock {\em Proceedings of the Thirty-Fourth {AAAI} Conference on Artificial
  Intelligence ({AAAI} 2020)}. {AAAI} Press, 2020.

\bibitem[\protect\citeauthoryear{Cook}{1971}]{cook-stoc1971}
Stephen~A. Cook.
\newblock The complexity of theorem-proving procedures.
\newblock In Michael~A. Harrison, Ranan~B. Banerji, and Jeffrey~D. Ullman,
  editors, {\em Proceedings of the 3rd Annual {ACM} Symposium on the Theory of
  Computing (STOC 1971)}, pages 151--158. {ACM}, 1971.

\bibitem[\protect\citeauthoryear{Corr{\^e}a \bgroup \em et al.\egroup
  }{2023}]{correa-et-al-icaps2023}
Augusto~B. Corr{\^e}a, Markus Hecher, Malte Helmert, Davide~Mario Longo,
  Florian Pommerening, and Stefan Woltran.
\newblock Grounding planning tasks using tree decompositions and iterated
  solving.
\newblock In Koenig et~al. \shortcite{icaps2023}.

\bibitem[\protect\citeauthoryear{Darwiche and
  Marquis}{2002}]{darwiche-marquis-jair2002}
Adnan Darwiche and Pierre Marquis.
\newblock A knowledge compilation map.
\newblock {\em Journal of Artificial Intelligence Research}, 17:229--264, 2002.

\bibitem[\protect\citeauthoryear{Darwiche and
  Marquis}{2024}]{DarwicheMarquis24}
Adnan Darwiche and Pierre Marquis.
\newblock Knowledge compilation.
\newblock {\em Annals of Mathematics and Artificial Intelligence}, 2024.

\bibitem[\protect\citeauthoryear{Darwiche}{1999}]{darwiche-ijcai1999}
Adnan Darwiche.
\newblock Compiling knowledge into decomposable negation normal form.
\newblock In Thomas Dean, editor, {\em Proceedings of the Sixteenth
  International Joint Conference on Artificial Intelligence ({IJCAI} 1999)},
  pages 284--289. Morgan Kaufmann, 1999.

\bibitem[\protect\citeauthoryear{Darwiche}{2001a}]{darwiche-jacm2001}
Adnan Darwiche.
\newblock Decomposable negation normal form.
\newblock {\em Journal of the ACM}, 48(4):608--647, 2001.

\bibitem[\protect\citeauthoryear{Darwiche}{2001b}]{darwiche2001tractable}
Adnan Darwiche.
\newblock On the tractable counting of theory models and its application to
  truth maintenance and belief revision.
\newblock {\em Journal of Applied Non-Classical Logics}, 11(1-2):11--34, 2001.

\bibitem[\protect\citeauthoryear{Darwiche}{2004}]{Darwiche04a}
Adnan Darwiche.
\newblock New advances in compiling {CNF} into decomposable negation normal
  form.
\newblock In Ram{\'{o}}n~L{\'{o}}pez de~M{\'{a}}ntaras and Lorenza Saitta,
  editors, {\em Proceedings of the 16th Eureopean Conference on Artificial
  Intelligence, ({ECAI} 2004), Valencia, Spain, August 22-27, 2004}, pages
  328--332. {IOS} Press, 2004.

\bibitem[\protect\citeauthoryear{Domshlak and
  Hoffmann}{2007}]{domshlak-hoffmann-jair2007}
Carmel Domshlak and J{\"{o}}rg Hoffmann.
\newblock Probabilistic planning via heuristic forward search and weighted
  model counting.
\newblock {\em Journal of Artificial Intelligence Research}, 30:565--620, 2007.

\bibitem[\protect\citeauthoryear{Dy and Natarajan}{2024}]{aaai2024}
Jennifer Dy and Sriraam Natarajan, editors.
\newblock {\em Proceedings of the Thirty-Eighth {AAAI} Conference on Artificial
  Intelligence ({AAAI} 2024)}. {AAAI} Press, 2024.

\bibitem[\protect\citeauthoryear{Eifler \bgroup \em et al.\egroup
  }{2020}]{eifler-et-al-aaai2020}
Rebecca Eifler, Michael Cashmore, J{\"{o}}rg Hoffmann, Daniele Magazzeni, and
  Marcel Steinmetz.
\newblock A new approach to plan-space explanation: Analyzing plan-property
  dependencies in oversubscription planning.
\newblock In Conitzer and Sha \shortcite{aaai2020}, pages 9818--9826.

\bibitem[\protect\citeauthoryear{Eiter \bgroup \em et al.\egroup
  }{2024a}]{EiterFichteHecher24}
Thomas Eiter, Johannes~Klaus Fichte, Markus Hecher, and Stefan Woltran.
\newblock Epistemic logic programs: Non-ground and counting complexity.
\newblock In Kate Larson, editor, {\em Proceedings of the 32nd International
  Joint Conference on Artificial Intelligence (IJCAI 2024)}, pages 3333--3341.
  ijcai.org, 2024.

\bibitem[\protect\citeauthoryear{Eiter \bgroup \em et al.\egroup
  }{2024b}]{eiter-et-al-aij2024}
Thomas Eiter, Markus Hecher, and Rafael Kiesel.
\newblock aspmc: New frontiers of algebraic answer set counting.
\newblock {\em Artificial Intelligence}, 330:104109, 2024.

\bibitem[\protect\citeauthoryear{Elahi and
  Rintanen}{2024}]{elahi-rintanen-aaai2024}
Mojtaba Elahi and Jussi Rintanen.
\newblock Optimizing the optimization of planning domains by automatic action
  schema splitting.
\newblock In Dy and Natarajan \shortcite{aaai2024}, pages 20096--20103.

\bibitem[\protect\citeauthoryear{Fenner \bgroup \em et al.\egroup
  }{1999}]{fenner-et-al-eccc1999}
Stephen~A. Fenner, Frederic Green, Steven Homer, and Randall Pruim.
\newblock Determining acceptance possibility for a quantum computation is hard
  for the polynomial hierarchy.
\newblock {\em ECCC}, {TR99-003}, 1999.

\bibitem[\protect\citeauthoryear{Fichte \bgroup \em et al.\egroup
  }{2017}]{fichte-et-al-lpnmr2017}
Johannes~Klaus Fichte, Markus Hecher, Michael Morak, and Stefan Woltran.
\newblock Answer set solving with bounded treewidth revisited.
\newblock In Marcello Balduccini and Tomi Janhunen, editors, {\em Proceedings
  of the Fourteenth Conference on Programming and Nonmonotonic Reasoning (LPNMR
  2017)}, pages 132--145, 2017.

\bibitem[\protect\citeauthoryear{Fichte \bgroup \em et al.\egroup
  }{2021}]{fichte-et-al-jea2021}
Johannes~K. Fichte, Markus Hecher, and Florim Hamiti.
\newblock The model counting competition 2020.
\newblock {\em {ACM} Journal of Experimental Algorithmics}, 26(13):1--26, 2021.

\bibitem[\protect\citeauthoryear{Fichte \bgroup \em et al.\egroup
  }{2022a}]{fichte-et-al-aaai2022}
Johannes~Klaus Fichte, Sarah~Alice Gaggl, and Dominik Rusovac.
\newblock Rushing and strolling among answer sets -- navigation made easy.
\newblock In Honavar and Spaan \shortcite{aaai2022}, pages 5651--5659.

\bibitem[\protect\citeauthoryear{Fichte \bgroup \em et al.\egroup
  }{2022b}]{fichte-et-al-ijcai2022}
Johannes~Klaus Fichte, Markus Hecher, and Mohamed~A. Nadeem.
\newblock Plausibility reasoning via projected answer set counting - {A} hybrid
  approach.
\newblock In Luc {De Raedt}, editor, {\em Proceedings of the 31st International
  Joint Conference on Artificial Intelligence (IJCAI 2022)}, pages 2620--2626.
  IJCAI, 2022.

\bibitem[\protect\citeauthoryear{Fichte \bgroup \em et al.\egroup
  }{2022c}]{FichteHecherRoland22}
Johannes~Klaus Fichte, Markus Hecher, and Valentin Roland.
\newblock Proofs for propositional model counting.
\newblock In Frisch and Gregory \shortcite{sat2022}, pages 30:1--30:24.

\bibitem[\protect\citeauthoryear{Fichte \bgroup \em et al.\egroup
  }{2024}]{FichteGagglHecher24}
Johannes~Klaus Fichte, Sarah~Alice Gaggl, Markus Hecher, and Dominik Rusovac.
\newblock {IASCAR:} incremental answer set counting by anytime refinement.
\newblock {\em Theory Pract. Log. Program.}, 24(2):505--532, 2024.

\bibitem[\protect\citeauthoryear{Frank \bgroup \em et al.\egroup
  }{2024}]{frank-et-al-icapswsrddps2024}
Jeremy Frank, Alison Paredes, J.~Benton, and Christian Muise.
\newblock Bias in planning algorithms.
\newblock In {\em ICAPS Workshop on Reliable Data-Driven Planning and
  Scheduling (RDDPS)}, 2024.

\bibitem[\protect\citeauthoryear{Frisch and Gregory}{2022}]{sat2022}
Alan~M. Frisch and Peter Gregory, editors.
\newblock {\em Proceedings of Twenty-Fifth International Conference on Theory
  and Applications of Satisfiability Testing (SAT 2022)}. Schloss Dagstuhl --
  Leibniz-Zentrum f{\"u}r Informatik, 2022.

\bibitem[\protect\citeauthoryear{Gebser \bgroup \em et al.\egroup
  }{2009}]{gebser-et-al-cpaior2009}
Martin Gebser, Benjamin Kaufmann, and Torsten Schaub.
\newblock Solution enumeration for projected boolean search problems.
\newblock In Willem~Jan van Hoeve and John~N. Hooker, editors, {\em Proceedings
  of the 6th International Conference on Integration of AI and OR Techniques in
  Constraint Programming for Combinatorial Optimization Problems (CPAIOR
  2009)}, pages 71--86. Springer-Verlag, 2009.

\bibitem[\protect\citeauthoryear{Gebser \bgroup \em et al.\egroup
  }{2011}]{gebser-et-al-lpnmr2011}
Martin Gebser, Roland Kaminski, Arne K{\"o}nig, and Torsten Schaub.
\newblock Advances in gringo series 3.
\newblock In James~P. Delgrande and Wolfgang Faber, editors, {\em Proceedings
  of the Eleventh Conference on Programming and Nonmonotonic Reasoning (LPNMR
  2011)}, pages 345--351, 2011.

\bibitem[\protect\citeauthoryear{Gebser \bgroup \em et al.\egroup
  }{2014}]{gebser-et-al-arxiv2014}
Martin Gebser, Roland Kaminski, Benjamin Kaufmann, and Torsten Schaub.
\newblock Clingo = {ASP} + control: Preliminary report.
\newblock arXiv:1405.3694 [cs.PL], 2014.

\bibitem[\protect\citeauthoryear{Gill}{1977}]{gill-sicomp1977}
John Gill.
\newblock Computational complexity of probabilistic turing machines.
\newblock {\em SIAM Journal on Computing}, 6(4):675--695, 1977.

\bibitem[\protect\citeauthoryear{Gnad \bgroup \em et al.\egroup
  }{2019}]{gnad-et-al-aaai2019}
Daniel Gnad, {\'A}lvaro Torralba, Mart{\'\i}n~Ariel Dom{\'\i}nguez, Carlos
  Areces, and Facundo Bustos.
\newblock Learning how to ground a plan -- {Partial} grounding in classical
  planning.
\newblock In {\em Proceedings of the Thirty-Third {AAAI} Conference on
  Artificial Intelligence ({AAAI} 2019)}, pages 7602--7609. {AAAI} Press, 2019.

\bibitem[\protect\citeauthoryear{Gragera \bgroup \em et al.\egroup
  }{2023}]{gragera-et-al-icaps2023}
Alba Gragera, Raquel Fuentetaja, {\'{A}}ngel~Garc{\'{\i}}a Olaya, and Fernando
  Fern{\'{a}}ndez.
\newblock A planning approach to repair domains with incomplete action effects.
\newblock In Koenig et~al. \shortcite{icaps2023}, pages 153--161.

\bibitem[\protect\citeauthoryear{Hahn \bgroup \em et al.\egroup
  }{2022}]{hahn-et-al-rulemlpr2022}
Susana Hahn, Tomi Janhunen, Roland Kaminski, Javier Romero, Nicolas
  R{\"{u}}hling, and Torsten Schaub.
\newblock {Plingo: {A} System for Probabilistic Reasoning in Clingo Based on
  {LP}\({}^{\mbox{ {MLN} }}\)}.
\newblock In {\em Proc.\ RuleML+RR 2022}, pages 54--62, 2022.

\bibitem[\protect\citeauthoryear{Helmert and
  Domshlak}{2009}]{helmert-domshlak-icaps2009}
Malte Helmert and Carmel Domshlak.
\newblock Landmarks, critical paths and abstractions: {What's} the difference
  anyway?
\newblock In Alfonso Gerevini, Adele Howe, Amedeo Cesta, and Ioannis Refanidis,
  editors, {\em Proceedings of the Nineteenth International Conference on
  Automated Planning and Scheduling (ICAPS 2009)}, pages 162--169. AAAI Press,
  2009.

\bibitem[\protect\citeauthoryear{Helmert}{2006}]{helmert-jair2006}
Malte Helmert.
\newblock The {Fast} {Downward} planning system.
\newblock {\em Journal of Artificial Intelligence Research}, 26:191--246, 2006.

\bibitem[\protect\citeauthoryear{Helmert}{2009}]{helmert-aij2009}
Malte Helmert.
\newblock Concise finite-domain representations for {PDDL} planning tasks.
\newblock {\em Artificial Intelligence}, 173:503--535, 2009.

\bibitem[\protect\citeauthoryear{H{\"o}ller and
  Behnke}{2022}]{hoeller-behnke-icaps2022}
Daniel H{\"o}ller and Gregor Behnke.
\newblock Encoding lifted classical planning in propositional logic.
\newblock In Thi{\'e}baux and Yeoh \shortcite{icaps2022}, pages 134--144.

\bibitem[\protect\citeauthoryear{Honavar and Spaan}{2022}]{aaai2022}
Vasant Honavar and Matthijs Spaan, editors.
\newblock {\em Proceedings of the Thirty-Sixth {AAAI} Conference on Artificial
  Intelligence ({AAAI} 2022)}. {AAAI} Press, 2022.

\bibitem[\protect\citeauthoryear{Kabir \bgroup \em et al.\egroup
  }{2022}]{KabirEverardoShukla22}
Mohimenul Kabir, Flavio~O. Everardo, Ankit~K. Shukla, Markus Hecher,
  Johannes~Klaus Fichte, and Kuldeep~S. Meel.
\newblock Approxasp - a scalable approximate answer set counter.
\newblock In Honavar and Spaan \shortcite{aaai2022}, pages 5755--5764.

\bibitem[\protect\citeauthoryear{Karpas and
  Domshlak}{2009}]{karpas-domshlak-ijcai2009}
Erez Karpas and Carmel Domshlak.
\newblock Cost-optimal planning with landmarks.
\newblock In Craig Boutilier, editor, {\em Proceedings of the 21st
  International Joint Conference on Artificial Intelligence ({IJCAI} 2009)},
  pages 1728--1733. AAAI Press, 2009.

\bibitem[\protect\citeauthoryear{Katz and Lee}{2023}]{katz-lee-ijcai2023}
Michael Katz and Junkyu Lee.
\newblock K* search over orbit space for top-k planning.
\newblock In Edith Elkind, editor, {\em Proceedings of the 32nd International
  Joint Conference on Artificial Intelligence (IJCAI 2023)}. IJCAI, 2023.

\bibitem[\protect\citeauthoryear{Katz and
  Sohrabi}{2020}]{katz-sohrabi-aaai2020}
Michael Katz and Shirin Sohrabi.
\newblock Reshaping diverse planning.
\newblock In Conitzer and Sha \shortcite{aaai2020}, pages 9892--9899.

\bibitem[\protect\citeauthoryear{Katz \bgroup \em et al.\egroup
  }{2018}]{katz-et-al-icaps2018}
Michael Katz, Shirin Sohrabi, Octavian Udrea, and Dominik Winterer.
\newblock A novel iterative approach to top-k planning.
\newblock In Mathijs de~Weerdt, Sven Koenig, Gabriele R{\"o}ger, and Matthijs
  Spaan, editors, {\em Proceedings of the Twenty-Eighth International
  Conference on Automated Planning and Scheduling (ICAPS 2018)}, pages
  132--140. AAAI Press, 2018.

\bibitem[\protect\citeauthoryear{Kautz and
  Selman}{1992}]{kautz-selman-ecai1992}
Henry Kautz and Bart Selman.
\newblock Planning as satisfiability.
\newblock In Bernd Neumann, editor, {\em Proceedings of the 10th {European}
  Conference on {Artificial} {Intelligence} ({ECAI} 1992)}, pages 359--363.
  John Wiley and Sons, 1992.

\bibitem[\protect\citeauthoryear{Kiesel and Eiter}{2023}]{kiesel-eiter-kr2023}
Rafael Kiesel and Thomas Eiter.
\newblock Knowledge compilation and more with {SharpSAT-TD}.
\newblock In Pierre Marquis, Tran~Cao Son, and Gabriele Kern-Isberner, editors,
  {\em Proceedings of the Twentieth International Conference on Principles of
  Knowledge Representation and Reasoning (KR 2023)}, pages 406--416. IJCAI
  Organization, 2023.

\bibitem[\protect\citeauthoryear{Kleine{ }B{\"{u}}ning and
  Lettmann}{1999}]{kleine-lettmann-1999}
Hans Kleine{ }B{\"{u}}ning and Theodor Lettmann.
\newblock {\em Propositional logic -- deduction and algorithms}, volume~48 of
  {\em Cambridge tracts in theoretical computer science}.
\newblock Cambridge University Press, 1999.

\bibitem[\protect\citeauthoryear{Koenig \bgroup \em et al.\egroup
  }{2023}]{icaps2023}
Sven Koenig, Roni Stern, and Mauro Vallati, editors.
\newblock {\em Proceedings of the Thirty-Third International Conference on
  Automated Planning and Scheduling (ICAPS 2023)}. AAAI Press, 2023.

\bibitem[\protect\citeauthoryear{Krajnansk{\'{y}} \bgroup \em et al.\egroup
  }{2014}]{krajnansky-et-al-ecai2014}
Michal Krajnansk{\'{y}}, J{\"{o}}rg Hoffmann, Olivier Buffet, and Alan Fern.
\newblock Learning pruning rules for heuristic search planning.
\newblock In Torsten Schaub, Gerhard Friedrich, and Barry O'Sullivan, editors,
  {\em Proceedings of the 21st {European} Conference on {Artificial}
  {Intelligence} ({ECAI} 2014)}, pages 483--488. IOS Press, 2014.

\bibitem[\protect\citeauthoryear{Krarup \bgroup \em et al.\egroup
  }{2021}]{krarup-et-al-jair2021}
Benjamin Krarup, Senka Krivic, Daniele Magazzeni, Derek Long, Michael Cashmore,
  and David~E. Smith.
\newblock Contrastive explanations of plans through model restrictions.
\newblock {\em Journal of Artificial Intelligence Research}, 72:533--612, 2021.

\bibitem[\protect\citeauthoryear{Lagniez and
  Marquis}{2017}]{lagniez-marquis-ijcai2017}
Jean{-}Marie Lagniez and Pierre Marquis.
\newblock An improved decision-{DNNF} compiler.
\newblock In Carles Sierra, editor, {\em Proceedings of the 26th International
  Joint Conference on Artificial Intelligence (IJCAI 2017)}, pages 667--673.
  IJCAI, 2017.

\bibitem[\protect\citeauthoryear{Lai \bgroup \em et al.\egroup
  }{2021}]{lai-et-al-aaai2021}
Yong Lai, Kuldeep~S. Meel, and Roland H.~C. Yap.
\newblock The power of literal equivalence in model counting.
\newblock In Kevin Leyton-Brown and Mausam, editors, {\em Proceedings of the
  Thirty-Fifth {AAAI} Conference on Artificial Intelligence ({AAAI} 2021)},
  pages 3851--3859. {AAAI} Press, 2021.

\bibitem[\protect\citeauthoryear{Lin \bgroup \em et al.\egroup
  }{2023}]{lin-et-al-aaai2023}
Songtuan Lin, Alban Grastien, and Pascal Bercher.
\newblock Towards automated modeling assistance: An efficient approach for
  repairing flawed planning domains.
\newblock In Yiling Chen and Jennifer Neville, editors, {\em Proceedings of the
  Thirty-Seventh {AAAI} Conference on Artificial Intelligence ({AAAI} 2023)},
  pages 12022--12031. {AAAI} Press, 2023.

\bibitem[\protect\citeauthoryear{Mahajan and Slivovsky}{2023}]{sat2023}
Meena Mahajan and Friedrich Slivovsky, editors.
\newblock {\em Proceedings of Twenty-Sixth International Conference on Theory
  and Applications of Satisfiability Testing (SAT 2023)}, volume 271. Schloss
  Dagstuhl -- Leibniz-Zentrum f{\"u}r Informatik, 2023.

\bibitem[\protect\citeauthoryear{Masina \bgroup \em et al.\egroup
  }{2023}]{masina-et-al-sat2023}
Gabriele Masina, Giuseppe Spallitta, and Roberto Sebastiani.
\newblock On {CNF} conversion for disjoint {SAT} enumeration.
\newblock In Mahajan and Slivovsky \shortcite{sat2023}, pages 15:1--15:16.

\bibitem[\protect\citeauthoryear{Mirsky \bgroup \em et al.\egroup
  }{2021}]{mirsky-et-al-book2021}
Reuth Mirsky, Sarah Keren, and Christopher~W. Geib.
\newblock {\em Introduction to Symbolic Plan and Goal Recognition}.
\newblock Synthesis Lectures on Artificial Intelligence and Machine Learning.
  Morgan {\&} Claypool Publishers, 2021.

\bibitem[\protect\citeauthoryear{Muise}{2016}]{muise-icaps2016systemdemos}
Christian Muise.
\newblock {Planning.Domains}.
\newblock In {\em {ICAPS} 2016 System Demonstrations and Exhibits}, 2016.

\bibitem[\protect\citeauthoryear{Papadimitriou and
  Yannakakis}{1982}]{papadimitriou-yannakakis-stoc1982}
Christos~H. Papadimitriou and Mihalis Yannakakis.
\newblock The complexity of facets (and some facets of complexity).
\newblock In Harry~R. Lewis, Barbara~B. Simons, Walter~A. Burkhard, and
  Lawrence~H. Landweber, editors, {\em Proceedings of the Fourteenth Annual
  {ACM} Symposium on the Theory of Computing (STOC 1982)}, pages 255--260.
  {ACM}, 1982.

\bibitem[\protect\citeauthoryear{Papadimitriou}{1994}]{papadimitriou-1994}
Christos~H. Papadimitriou.
\newblock {\em Computational Complexity}.
\newblock Addison-Wesley, 1994.

\bibitem[\protect\citeauthoryear{Paredes \bgroup \em et al.\egroup
  }{2024}]{paredes-et-al-icapswsrddps2024}
Alison Paredes, Jeremy Frank, J.~Benton, and Christian Muise.
\newblock Planning bias: Planning as a source of sampling bias.
\newblock In {\em ICAPS Workshop on Reliable Data-Driven Planning and
  Scheduling (RDDPS)}, 2024.

\bibitem[\protect\citeauthoryear{Porteous \bgroup \em et al.\egroup
  }{2001}]{porteous-et-al-ecp2001}
Julie Porteous, Laura Sebastia, and J{\"o}rg Hoffmann.
\newblock On the extraction, ordering, and usage of landmarks in planning.
\newblock In Amedeo Cesta and Daniel Borrajo, editors, {\em Proceedings of the
  Sixth European Conference on Planning ({ECP} 2001)}, pages 174--182. AAAI
  Press, 2001.

\bibitem[\protect\citeauthoryear{Richter \bgroup \em et al.\egroup
  }{2008}]{richter-et-al-aaai2008}
Silvia Richter, Malte Helmert, and Matthias Westphal.
\newblock Landmarks revisited.
\newblock In {\em Proceedings of the Twenty-Third {AAAI} Conference on
  Artificial Intelligence ({AAAI} 2008)}, pages 975--982. {AAAI} Press, 2008.

\bibitem[\protect\citeauthoryear{Rintanen}{2011}]{rintanen-ipc2011}
Jussi Rintanen.
\newblock Madagascar: Scalable planning with {SAT}.
\newblock In {\em IPC 2011 Planner Abstracts}, pages 61--64, 2011.

\bibitem[\protect\citeauthoryear{Rintanen}{2012}]{rintanen-aij2012}
Jussi Rintanen.
\newblock Planning as satisfiability: Heuristics.
\newblock {\em Artificial Intelligence}, 193:45--86, 2012.

\bibitem[\protect\citeauthoryear{Rintanen}{2014}]{rintanen-ipc2014}
Jussi Rintanen.
\newblock Madagascar: Scalable planning with {SAT}.
\newblock In {\em Eighth {I}nternational {P}lanning {C}ompetition ({IPC}-8):
  Planner Abstracts}, pages 66--70, 2014.

\bibitem[\protect\citeauthoryear{Robinson and
  Voronkov}{2001}]{RobinsonVoronkov01}
John~Alan Robinson and Andrei Voronkov, editors.
\newblock {\em Handbook of Automated Reasoning (in 2 volumes)}.
\newblock Elsevier and {MIT} Press, 2001.

\bibitem[\protect\citeauthoryear{Rusovac \bgroup \em et al.\egroup
  }{2024}]{RusovacHecherGebser24}
Dominik Rusovac, Markus Hecher, Martin Gebser, Sarah~Alice Gaggl, and
  Johannes~K. Fichte.
\newblock {Navigating and Querying Answer Sets: How Hard Is It Really and Why?}
\newblock In Magdalena Ortiz and Maurice Pagnucco, editors, {\em Proceedings of
  the Twentieth International Conference on Principles of Knowledge
  Representation and Reasoning (KR 2024)}, pages 642--653. IJCAI Organization,
  2024.

\bibitem[\protect\citeauthoryear{Russell and
  Norvig}{1995}]{russell-norvig-1995}
Stuart Russell and Peter Norvig.
\newblock {\em Artificial {I}ntelligence --- {A} Modern Approach}.
\newblock Prentice Hall, 1995.

\bibitem[\protect\citeauthoryear{Shen \bgroup \em et al.\egroup
  }{2020}]{shen-et-al-icaps2020}
William Shen, Felipe Trevizan, and Sylvie Thi{\'e}baux.
\newblock Learning domain-independent planning heuristics with hypergraph
  networks.
\newblock In J.~Christopher Beck, Erez Karpas, and Shirin Sohrabi, editors,
  {\em Proceedings of the Thirtieth International Conference on Automated
  Planning and Scheduling (ICAPS 2020)}, pages 574--584. AAAI Press, 2020.

\bibitem[\protect\citeauthoryear{Smith}{2004}]{smith-icaps2004}
David~E. Smith.
\newblock Choosing objectives in over-subscription planning.
\newblock In Shlomo Zilberstein, Jana Koehler, and Sven Koenig, editors, {\em
  Proceedings of the Fourteenth International Conference on Automated Planning
  and Scheduling ({ICAPS} 2004)}, pages 393--401. AAAI Press, 2004.

\bibitem[\protect\citeauthoryear{Sohrabi \bgroup \em et al.\egroup
  }{2018}]{sohrabi-et-al-aaai2018}
Shirin Sohrabi, Anton~V. Riabov, Michael Katz, and Octavian Udrea.
\newblock An {AI} planning solution to scenario generation for enterprise risk
  management.
\newblock In {\em Proceedings of the Thirty-Second {AAAI} Conference on
  Artificial Intelligence ({AAAI} 2018)}, pages 160--167. {AAAI} Press, 2018.

\bibitem[\protect\citeauthoryear{Spallitta \bgroup \em et al.\egroup
  }{2024}]{spallitta-et-al-arxiv2023}
Giuseppe Spallitta, Roberto Sebastiani, and Armin Biere.
\newblock Disjoint partial enumeration without blocking clauses.
\newblock In Dy and Natarajan \shortcite{aaai2024}.

\bibitem[\protect\citeauthoryear{Speck \bgroup \em et al.\egroup
  }{2020}]{speck-et-al-aaai2020}
David Speck, Robert Mattm{\"u}ller, and Bernhard Nebel.
\newblock Symbolic top-k planning.
\newblock In Conitzer and Sha \shortcite{aaai2020}, pages 9967--9974.

\bibitem[\protect\citeauthoryear{Speck \bgroup \em et al.\egroup
  }{2024}]{speck-et-al-zenodo2024}
David Speck, Markus Hecher, Daniel Gnad\textsuperscript, Johannes~K. Fichte,
  and Augusto~B. Corr{\^e}a.
\newblock Code, benchmarks and data for the aaai 2025 paper ``{Counting and
  Reasoning with Plans}''.
\newblock Zenodo, 2024.

\bibitem[\protect\citeauthoryear{Stockmeyer and
  Meyer}{1973}]{stockmeyer-meyer-stoc1973}
Larry~J. Stockmeyer and Albert~R. Meyer.
\newblock Word problems requiring exponential time.
\newblock In Alfred~V. Aho, Allan Borodin, Robert~L. Constable, Robert~W.
  Floyd, Michael~A. Harrison, Richard~M. Karp, and H.~Raymond Strong, editors,
  {\em Proceedings of the 5th Annual {ACM} Symposium on the Theory of Computing
  (STOC 1973)}, pages 1--9, 1973.

\bibitem[\protect\citeauthoryear{Stockmeyer}{1976}]{stockmeyer-tcs1976}
Larry~J. Stockmeyer.
\newblock The polynomial-time hierarchy.
\newblock {\em Theoretical Computer Science}, 3(1):1--22, 1976.

\bibitem[\protect\citeauthoryear{Sundermann \bgroup \em et al.\egroup
  }{2024}]{sundermann-et-al-tsem2024}
Chico Sundermann, Heiko Raab, Tobias Hess, Thomas Th{\"{u}}m, and Ina Schaefer.
\newblock Reusing d-{DNNFs} for efficient feature-model counting.
\newblock {\em {ACM} Trans. Softw. Eng. Methodol}, 2024.

\bibitem[\protect\citeauthoryear{Taitler \bgroup \em et al.\egroup
  }{2024}]{taitler-et-al-aimag2024}
Ayal Taitler, Ron Alford, Joan Espasa, Gregor Behnke, Daniel Fi{\v{s}}er,
  Michael Gimelfarb, Florian Pommerening, Scott Sanner, Enrico Scala, Dominik
  Schreiber, Javier Segovia-Aguas, and Jendrik Seipp.
\newblock The 2023 {International Planning Competition}.
\newblock {\em AI Magazine}, pages 1--17, 2024.

\bibitem[\protect\citeauthoryear{Thi{\'e}baux and Yeoh}{2022}]{icaps2022}
Sylvie Thi{\'e}baux and William Yeoh, editors.
\newblock {\em Proceedings of the Thirty-Second International Conference on
  Automated Planning and Scheduling (ICAPS 2022)}. AAAI Press, 2022.

\bibitem[\protect\citeauthoryear{Torralba \bgroup \em et al.\egroup
  }{2017}]{torralba-et-al-aij2017}
{\'A}lvaro Torralba, Vidal Alc\'{a}zar, Peter Kissmann, and Stefan Edelkamp.
\newblock Efficient symbolic search for cost-optimal planning.
\newblock {\em Artificial Intelligence}, 242:52--79, 2017.

\bibitem[\protect\citeauthoryear{Ulbricht}{2019}]{Ulbricht19}
Markus Ulbricht.
\newblock {\em Understanding Inconsistency -- A Contribution to the Field of
  Non-monotonic Reasoning.}
\newblock PhD thesis, Universit{\"a}t Leipzig, 2019.

\bibitem[\protect\citeauthoryear{Valiant}{1979}]{valiant-tcs1979}
Leslie~G. Valiant.
\newblock The complexity of computing the permanent.
\newblock {\em Theoretical Computer Science}, 8:189--201, 1979.

\bibitem[\protect\citeauthoryear{{von Tschammer} \bgroup \em et al.\egroup
  }{2022}]{vontschammer-et-al-icaps2022}
Julian {von Tschammer}, Robert Mattm{\"u}ller, and David Speck.
\newblock Loopless top-k planning.
\newblock In Thi{\'e}baux and Yeoh \shortcite{icaps2022}, pages 380--384.

\bibitem[\protect\citeauthoryear{Wrathall}{1976}]{wrathall-tcs1976}
Celia Wrathall.
\newblock Complete sets and the polynomial-time hierarchy.
\newblock {\em Theoretical Computer Science}, 3(1):23--33, 1976.

\bibitem[\protect\citeauthoryear{Zhu and Givan}{2003}]{zhu-givan-icaps2003dc}
Lin Zhu and Robert Givan.
\newblock Landmark extraction via planning graph propagation.
\newblock In {\em ICAPS 2003 Doctoral Consortium}, pages 156--160, 2003.

\end{thebibliography}
\end{document}